\documentclass[letterpaper, 10 pt, conference]{ieeeconf}  

\IEEEoverridecommandlockouts                              

\overrideIEEEmargins                                      




\title{\LARGE \bf
 Tube RRT*: Efficient Homotopic Path Planning for Swarm Robotics Passing-Through Large-Scale Obstacle Environments
}

\author{Pengda Mao$^{1}$, Shuli Lv$^{1}$, and Quan Quan$^{1}$~\IEEEmembership{Senior Member,~IEEE}
	\thanks{$^{1}$Pengda Mao, Shuli Lv, and Quan Quan are with  School of Automation Science and Electrical Engineering,
		Beihang University, Beijing, 100191, P.R. China (e-mail: maopengda@buaa.edu.cn; lvshuli@buaa.edu.cn;
		qq\_buaa@buaa.edu.cn).
	}%
}

\usepackage{pdfpages}
\usepackage{graphicx}
\usepackage{booktabs}
\usepackage{multirow}
\usepackage{amssymb,amsmath,bm,color,enumerate,mathtools,fancyhdr,graphics}
\usepackage{mathrsfs}
\usepackage{lipsum}
\usepackage{changepage}
\usepackage{subfigure}
\usepackage{stfloats}
\usepackage{setspace}
\usepackage{stfloats}
\usepackage[colorlinks,linkcolor=blue]{hyperref}
\hypersetup{hidelinks,
	colorlinks=true,
	allcolors=black,
	pdfstartview=Fit,
	breaklinks=true}
\usepackage{algorithmic}
\usepackage{algorithm}

\usepackage{array}

\newtheorem{Def}{Definition}
\newtheorem{The}{Theorem}

\newtheorem{prop}{Proposition}
\newtheorem{lemma}{Lemma}

\UseRawInputEncoding
\begin{document}

\maketitle
\thispagestyle{empty}
\pagestyle{empty}

\begin{abstract}
Recently, the concept of homotopic trajectory planning has emerged as a novel solution to navigation in large-scale obstacle environments for swarm robotics, offering a wide ranging of applications. However, it lacks an efficient homotopic path planning method in large-scale obstacle environments. This paper introduces Tube RRT*, an innovative homotopic path planning method that builds upon and improves the Rapidly-exploring Random Tree (RRT) algorithm. Tube RRT* is specifically designed to generate homotopic paths, strategically considering gap volume and path length to mitigate swarm congestion and ensure agile navigation. Through comprehensive simulations and experiments, the effectiveness of Tube RRT* is validated.
\end{abstract}

\section{INTRODUCTION}
In recent years, swarm robotics has emerged as a promising application in various fields, including search and rescue operations, environmental monitoring, agriculture, exploration, and logistics. A current focus is on determining the safe, reliable, and smooth movement of swarm robotics within large-scale obstacle environments.

In addressing the issue of robot swarms passing through large-scale obstacle environments, there are currently numerous methods available, including trajectory planning \cite{zhou2022swarm}, control-based methods \cite{vasarhelyi2018optimized}, and virtual tube approaches \cite{Quan2021Distributed}. The swarm trajectory planning employs a hierarchical structure, which enables the smooth movement of the swarm. However, when the number of robots is large and the environment is complex, the optimization problems of the trajectory planning become complex and difficult to solve, leading to high computational load and frequent replanning \cite{usenko_real-time_2017,ding_efficient_2019}. Control-based methods, on the other hand, have stability, simplicity, and low computation cost. However, they lack prediction and the simplicity of robot modeling leads to less smooth movement and susceptibility to local minima. By combining the strengths of these two methods, the optimal virtual tube method proposed in our previous work \cite{mao2023optimal} is a method suitable for large-scale swarm movement. It confines the robot swarm in free space and achieves low computation cost to pass through obstacle environments through centralized trajectory planning and distributed control.

\textcolor{blue}{Homotopic paths could serve as a front end to the above methods. 
	Homotopic paths in topology refer to a concept that illustrates the continuous deformation between two paths. 
	By ensuring that there are no obstacles between paths, homotopic paths decouple obstacle avoidance from inter-robot avoidance, significantly reducing the risk of deadlock. This decoupling simplifies the control strategy and increases efficiency. Therefore, an effective homotopic path planning method is essential to enable the swarm to navigate through obstacle environments.}

There are many works on finding homotopic paths to speed up the computation. A reference frame determining the topological relationship between obstacles is used to compute a topological graph for restriction criteria of homotopy classes \cite{hernandez2015comparison}. Both the Homotopic RRT (HRRT) \cite{kim2003motion} and the Homotopic A* (HA*) \cite{hart1968formal} explore the direction in complex space by checking the intersections with the reference frame to generate the homotopic paths. However, these homotopic path planning methods focus on accelerating the computation speed of a single robot to find a single optimal path \cite{yi2016homotopy,liu2023homotopy}. Even in swarm path planning problem, due to the lack of coordination among agents in the swarm, the planning of homotopic paths is limited to a single robot exploring a collision-avoidance path of a specific shape, such as an elliptical homotopy path \cite{fu2023ftsa}.

\textcolor{blue}{Previous studies \cite{Quan2021Distributed,mao2023optimal} of virtual tubes primarily focus on control and homotopic trajectory planning within the virtual tube but lack research on homotopic path planning. Current homotopic path planning methods \cite{hernandez2015comparison,kim2003motion,fu2023ftsa} are designed for single robots and do not consider the impact of gap volume on the swarm. Therefore, applying single-robot homotopic path planning methods is time-consuming and leads to sudden contractions, resulting in congestion within the swarm.}
Thus, in this paper, an efficient infinite homotopic path planning algorithm for swarm is proposed. The main contributions are as follows:
\begin{itemize}
	\item {A novel homotopic path planning algorithm, named Tube RRT*, is proposed to consider the path length and the volume of gaps synchronously, which generates infinite homotopic paths for swarm robotics.}
	\item \textcolor{blue}{The Tube RRT* centrally plans $l$ homotopic paths with a computational complexity of $O\left(n(\text{log}n+1)+l\right)$ for swarm robotics, compared to $O\left(nl(\text{log}n+1)\right)$ of the RRT*. }
	\item Theorems and proposition demonstrate that the proposed algorithm tends to find homotopic paths with the fewest path points and equal cross-sectional volumes, given a fixed total volume and path length.
	\item The effectiveness of the proposed algorithm is validated through various comparisons in simulations and experiments.
\end{itemize}

\section{Preliminaries and Problem Formulation}
In this section, the homotopic paths are introduced based on the related concepts of the virtual tube. Then, a homotopic path planning problem is described.
\subsection{Preliminaries}
\begin{Def}[Virtual Tube \cite{mao2023optimal}]
		A \emph{virtual tube} $\mathcal T$, as shown in Fig. \ref{fig:virtual-tube}(a), is a set in a configuration space $X$ represented by a 4-tuple $\left( {\mathcal C}_0, {\mathcal C}_1, {\bf f}, {\bf h} \right)$ where 
	\begin{itemize}
		\item ${\mathcal C}_0, {\mathcal C}_1$, called \emph{terminals}, are disjoint bounded convex subsets in $n$-dimension space.
		\item ${\bf f}$ is a \emph{diffeomorphism}: ${\mathcal C}_0 \to {\mathcal C}_1$, so that there is a set of order pairs ${\mathcal{P}} = \left\{ {\left( {{\bf q}_0,{\bf q}_m} \right)| {\bf q}_0 \in {\mathcal{C}_0},{\bf q}_m = {\bf f}\left( {\bf q}_0 \right) \in {\mathcal{C}}_1} \right\}$.
		\item ${\bf h}$ is a smooth\footnote{A real-valued function is said to be smooth if its derivatives of all orders exist and are continuous.} map: ${\mathcal P} \times {\mathcal I} \to {\mathcal T} $ where ${\mathcal I} = [0,1]$, such that ${\mathcal T} = \{ {\bf h} \left( \left({\bf q}_0,{\bf q}_m\right),t\right) | \left({\bf q}_0,{\bf q}_m\right) \in {\mathcal P} , t \in {\mathcal I} \}$, ${\bf h}\left( {\left( {\bf q}_0,{\bf q}_m \right),0} \right) = {\bf q}_0$, ${\bf h}\left( {\left( {\bf q}_0,{\bf q}_m \right),1} \right) = {\bf q}_m$. The function ${\bf h}\left( {\left( {\bf q}_0,{\bf q}_m \right),t} \right)$ is called a \emph{trajectory} for an order pair $\left( {\bf q}_0,{\bf q}_m\right)$.
	\end{itemize}
	\label{def:virtual-tube}
\end{Def}
\begin{figure}
	\centering
	\includegraphics{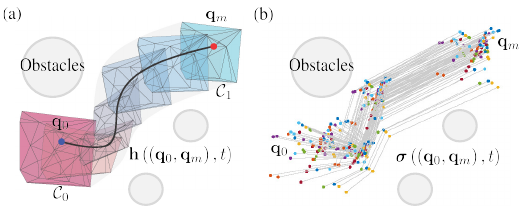}
	\caption{Virtual tube in obstacle environments. (a) The purple and blue polyhedrons are terminals. The black curve is the trajectory from ${\bf q}_0\in{\mathcal{C}_0}$ to ${\bf q}_m \in \mathcal{C}_1$. (b) The colorful points denote path points of the homotopic paths and the gray lines represent homotopic paths $\boldsymbol{\sigma}\left(\left({\bf q}_0,{\bf q}_m\right),t\right)$.}
	\label{fig:virtual-tube}
	\vspace{-0.7cm}
\end{figure}

\begin{Def}[Path]
	For any order pair $\left( {{{\bf{q}}_{0}},{{\bf{q}}_{m}}} \right)\in \mathcal{P}$, the \emph{path} is a continuous function $\boldsymbol{\sigma}\left(\left({\bf q}_0,{\bf q}_m\right),t\right) :{\mathcal P} \times {\mathcal I} \to { X}$ where $\boldsymbol{\sigma}\left(\left({\bf q}_0,{\bf q}_m\right),0\right)={\bf q}_0$ and $\boldsymbol{\sigma}\left(\left({\bf q}_0,{\bf q}_m\right),1\right)={\bf q}_m$, $m$ represents the number of the path points. And, the paths $\boldsymbol{\sigma}\left(\left({\bf q}_{0,k},{\bf q}_{m,k}\right),t\right)$ is the boundary paths if the points ${\bf q}_{0,k}$ and ${\bf q}_{m,k}$ are the vertexes of the terminals $\mathcal{C}_0$ and $\mathcal{C}_1$ respectively.
\end{Def}
\begin{Def}[Homotopic Paths]
	For any ${\boldsymbol{{\sigma}}}_1,\boldsymbol{{\sigma}}_2 \in \Sigma_{\boldsymbol{\sigma}}$, the paths ${\boldsymbol{{\sigma}}}_1$ and ${\boldsymbol{{\sigma}}}_2$, as shown in Fig. \ref{fig:virtual-tube}(b), are called \emph{homotopic paths} if there is a continuous map ${\bf H}:\mathcal{I} \times \mathcal{I}\to X$ such that ${\bf{H}}\left( {t,0} \right) = {\boldsymbol{\sigma} _1}\left( {\left( {{{\bf{q}}_{0,1}},{{\bf{q}}_{m,1}}} \right),t} \right)$ $,{\bf{H}}\left( {t,1} \right) = {\boldsymbol{\sigma} _2}\left( {\left( {{{\bf{q}}_{0,2}},{{\bf{q}}_{m,2}}} \right),t} \right)$, ${\bf{H}}\left( {0,0} \right) = {{\bf{q}}_{0,1}}$, ${\bf{H}}\left( {1,0} \right) = {{\bf{q}}_{m,1}}$, ${\bf{H}}\left( {0,1} \right) = {{\bf{q}}_{0,2}}$$,{\bf{H}}\left( {1,1} \right) = {{\bf{q}}_{m,2}}$.
	\label{def:homo-paths}
\end{Def}

It should be noted that the definition of homotopic paths in \emph{Definition \ref{def:homo-paths}} is a generalization of those defined in other works \cite{fu2023ftsa,osa2022motion}, without fixed start and goal points to accommodate the requirements of swarm.

\begin{Def}[Path Length]
	The path length of the homotopic paths $\Sigma_{\boldsymbol{\sigma}}$ is the length of the center path ${\boldsymbol{\sigma}_o}\in\Sigma_{\boldsymbol{\sigma}} $, which is expressed as ${L}\left( \boldsymbol{\sigma}_o  \right)$. The center paths $\boldsymbol{\sigma}_o$ is linear combination of the boundary paths $\boldsymbol{\sigma}_k\in\Sigma_{\boldsymbol{\sigma}} $, which is expressed as
	\begin{equation}
		 {{\boldsymbol{\sigma }}_o} = {M^{ - 1}}\sum\nolimits_{i = 1}^M {{{\boldsymbol{\sigma }}_i}} ,{{\boldsymbol{\sigma }}_i} \in {\boldsymbol{\sigma }},
	\end{equation}
where $M$ is the number of the boundary paths.
\label{def:tube-path}
\end{Def}

\subsection{Homotopic Path Planning Problem Formulation}
This work aims to develop a method to determine homotopic paths for the swarm robotics in 3-D space, considering both the path length and the opening volume. The problem is described as follows:

Let $ X \subset \mathbb{R}^3$ be the configuration space, $X_\text{obs}$ be the obstacle space. Thus, the free space is denoted as $X_\text{free} = X/X_\text{obs}$. The homotopic path planning algorithm aims to find the homotopic paths ${\sum _{{\boldsymbol{\sigma }}}} \subset  X_\text{free}$ with a center path $\boldsymbol{\sigma}_o$ such that
\begin{equation}
	\boldsymbol{\sigma}^*_o = \arg \mathop {\min }\limits_{\boldsymbol{\sigma}_o} {f_1}\left( \boldsymbol{\sigma}_o  \right) + {f_2}\left( \boldsymbol{\sigma}_o  \right),
\end{equation}
where $f_1,f_2: X \to \mathbb{R}_{\ge 0}$ are functions related to path length and gap volume respectively.
\section{Methods}
In this section, a new modified RRT* algorithm, Tube RRT*, is introduced. First, this new homotopic path planning algorithm is outlined in \emph{Algorithm \ref{alg:TubeRRT}}. Subsequently, some primitive procedures are defined. Finally, the analysis of properties of the proposed algorithm is presented.
\subsection{Outline}
The proposed algorithm shown in \emph{Algorithm \ref{alg:TubeRRT}} is divided into 6 steps. Step 1 (line 5-7, Fig. \ref{fig:scheme-of-tuberrt}(a)): Sample a random sphere\footnote{\textcolor{blue}{In the high-dimensional space, the n-dimension ellipsoid or convex polyhedra is more suitable than the sphere for describing free space. However, since this study considers the robot in swarm as the agent. Research in high-dimension spaces is beyond the scope of this paper.}} ${\bf x}_\text{rand}\in X_\text{free}$ centered in ${\bf q}_\text{rand}$ with the radius $r_\text{rand}$ which is the minimum distance to obstacles in free space, so that the sphere ${\bf x}_\text{rand}$ is denoted as ${\bf x}_\text{rand}=\left({\bf q}_\text{rand},r_\text{rand}\right)$. Step 2 (line 8, Fig. \ref{fig:scheme-of-tuberrt}(b)): Find the nearest sphere ${\bf x}_\text{nearest}$ to the random sphere ${\bf x}_\text{rand}$ in the tree $\mathcal{G}$. Step 3  (line 9-10, Fig. \ref{fig:scheme-of-tuberrt}(c)): Steer the random sphere ${\bf x}_\text{rand}$ towards the nearest sphere ${\bf x}_\text{nearest}$ to obtain a new sphere ${\bf x}_\text{new}$ which has an intersection between ${\bf x}_\text{nearest}$ and ${\bf x}_\text{new}$. Meanwhile, the radius $r_\text{new}$ of ${\bf x}_\text{new}$ remains consistent with the distance to obstacles, that is constantly changing and restricted by the minimum radius $r_\text{min}$. Step 4 (line 11, Fig. \ref{fig:scheme-of-tuberrt}(d)): Find a set $X_\text{near}$ that contains all spheres intersected with the new sphere ${\bf x}_\text{new}$ in the tree $\mathcal{G}$. Step 5 (line 12-13, Fig. \ref{fig:scheme-of-tuberrt}(e)): Rewire the tree to choose the connections among ${\bf x}_\text{new}$ and the spheres in the set $X_\text{near}$ with the minimum cost. 
\begin{figure*}
	\centering
	\includegraphics{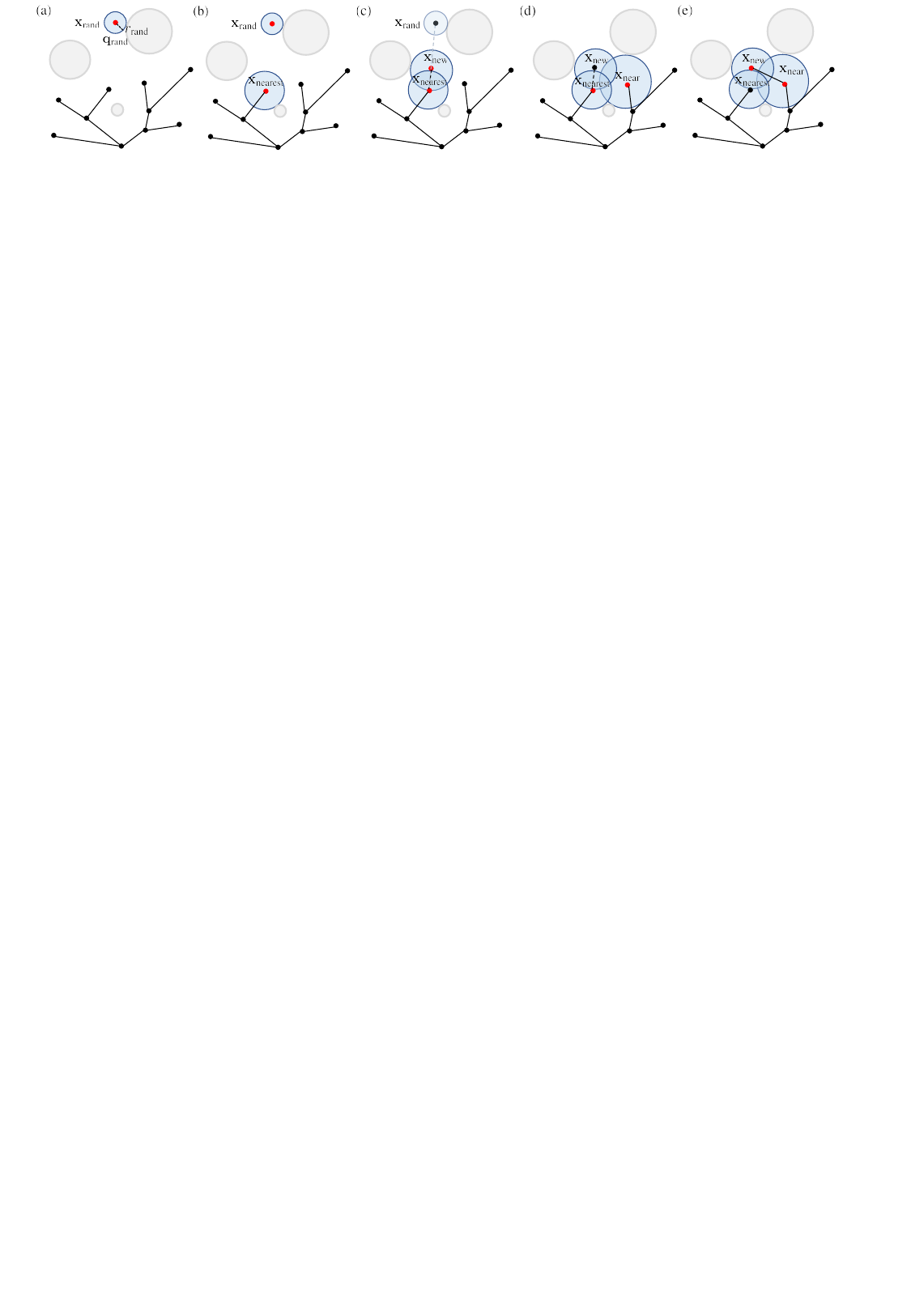}
	\caption{The scheme of Tube RRT* algorithm. Gray circles are obstacles and blue circles are centered in the red and black points. (a) Step 1: Sample a sphere ${\bf x}_\text{rand}$ which is denoted by the blue circle centered in ${\bf q}_\text{rand}$ with radius $r_\text{rand}$ in free space. (b) Step 2: Find the nearest sphere ${\bf x}_\text{nearest}$ in the tree. (c) Step 3: Steer the sample sphere ${\bf x}_\text{rand}$ towards the nearest sphere ${\bf x}_\text{nearest}$ to obtain ${\bf x}_\text{new}$ which has an intersection with ${\bf x}_\text{nearest}$. (d) Step 4: Find all the spheres in the tree which have intersections with the ${\bf x}_\text{new}$ to obtain the set $X_\text{near}$. (e) Step 5: Rewire the sphere ${\bf x}_\text{new}$ to a sphere in the tree to approach the minimum cost. And the spheres in $X_\text{near}$ also are rewired.}
	\label{fig:scheme-of-tuberrt}
	\vspace{-0.5cm}
\end{figure*}

\begin{algorithm}
	\caption{Tube RRT*}
	\begin{algorithmic}[1]
		\REQUIRE start point ${\bf q}_{\text{init}}$, goal point ${\bf q}_{\text{goal}}$, space $\mathcal{X}$
		\ENSURE $\mathcal{G}=\left(V,E\right)$
		\STATE $r_{\text{init}} \leftarrow \tt{FindMaxRadius}\left({\bf q}_{\text{init}}\right)$;
		\STATE ${{\bf{x}}_{{\text{init}}}} \leftarrow \left( {{{\bf{q}}_{{\text{init}}}},{r_{{\text{init}}}}} \right);$
		\STATE $V \leftarrow \left\{ {\bf{x}}_{{\text{init}}} \right\}$; $E \leftarrow \emptyset$; $\mathcal{G}=\left(V,E\right)$;
		\FOR{$i=1,...,n$}
		\STATE ${{\bf{q}}_{{\text{rand}}}} \leftarrow {\tt{SampleFree}}_{{{i}}};$
		\STATE  $r_{\text{rand}} \leftarrow \tt{FindMaxRadius}\left({\bf q}_{\text{rand}}\right)$;
		\STATE ${{\bf{x}}_{{\text{rand}}}} \leftarrow \left( {{{\bf{q}}_{{\text{rand}}}},{r_{{\text{rand}}}}} \right);$
		\STATE ${{\bf{x}}_{{\text{nearest}}}} \leftarrow \tt{Nearest}\left( {{{\bf{q}}_{{\text{rand}}}},{\cal G}} \right);$
		\STATE ${{\bf{x}}_{{\text{new}}}} \leftarrow \tt{TubeSteer}\left( {{{\bf{x}}_{{\text{nearest}}}},{{\bf{x}}_{{\text{rand}}}}} \right);$
		\IF{$r_\text{new}$ $> r_\text{min}$}
		\STATE ${X_{{\text{near}}}} \leftarrow {\tt{NearConnect}}\left( {{\cal G},{{\bf{x}}_{{\text{new}}}}} \right);$
		\STATE $V \leftarrow V   \cup  \left\{ {\bf{x}}_{{\text{new}}} \right\};$
		\STATE ${E} \leftarrow {\tt{Rewire}}\left({\bf x}_\text{new},{X}_\text{near},{E}\right)$;
		\ENDIF
		\ENDFOR
		\RETURN $\mathcal{G}=\left(V,E\right)$;
	\end{algorithmic}
	\label{alg:TubeRRT}
\end{algorithm}
\subsection{Proposed Algorithm}
{The Tube RRT* is a modified version of the RRT* algorithm, which still uniformly samples in $\mathcal{X}_\text{free}$ and modifies functions of steering, finding near nodes, and rewiring to consider both the volumes of sphere-sphere intersections and the tube path length in samples.}
Before the discussion and analysis, the functions in the proposed algorithm that differ from those in RRT* are detailed in the following.

$\tt ObstacleFree$: Connect two centers in spheres to obtain a line segment. Check if there is any intersection between the obstacles and this line segment. If an intersection is detected, return false; otherwise, proceed with true.  The details of the function $\tt ObstacleFree$ is depicted in \emph{Algorithm  \ref{alg:Rewire}}.

$\tt FindMaxRadius$: Find the nearest obstacle point ${\bf q}_\text{obs}$ to the input point ${\bf q}$. Subsequently, the minimum distance $r$ between ${\bf q}_\text{obs}$ and ${\bf q}$ is designated as the radius of a sphere centered at point ${\bf q}$, ensuring the sphere ${\bf x}=\left({\bf q},r\right)$ remains within the free space. Moreover, the radius $r$ must be less than the maximum radius $r_\text{max}$ defined based on environment and swarm conditions.

$\tt TubeSteer:$ The sphere ${\bf x}_\text{rand}$ centered at ${\bf q}_{\text{rand}}$ may not have intersections with the sphere ${\bf x}_\text{nearest}$ centered at ${\bf q}_{\text{nearest}}$, as shown in Fig. \ref{fig:scheme-of-tuberrt}(b). Thus, the sphere ${\bf x}_{\text{rand}}$ needs to be moved to the sphere ${\bf x}_{\text{new}}$ to have an intersection between ${\bf x}_{\text{new}}$ and ${\bf x}_{\text{nearest}}$, as shown in Fig. \ref{fig:scheme-of-tuberrt}(c). The details of the function $\tt TubeSteer$ is depicted in \emph{Algorithm  \ref{alg:TubeSteer}}.

\begin{algorithm}
	\caption{TubeSteer}
	\begin{algorithmic}[1]
		\REQUIRE sample sphere ${\bf x}_{\text{rand}}$, nearest sphere ${\bf x}_{\text{nearest}}$
		\ENSURE updated sphere ${\bf x}_{\text{new}}$
		\STATE $\left({\bf q}_{\text{new} } , {r_{\text{new} }}\right) \leftarrow {\bf x}_{\text{rand}}$;
		$\left({\bf q}_{\text{nearest} } , {r_{\text{nearest} }}\right) \leftarrow {\bf x}_{\text{nearest}}$; 
		\STATE $d \leftarrow \left\| {{{\bf{q}}_{{\text{new}}}} - {{\bf{q}}_{{\text{nearest}}}}} \right\|$;
		\STATE ${\bf t} \leftarrow {{\left( {{{\bf{q}}_{{\text{new}}}} - {{\bf{q}}_{{\text{nearest}}}}} \right)} \mathord{\left/
				{\vphantom {{\left( {{{\bf{q}}_{{\text{new}}}} - {{\bf{q}}_{{\text{nearest}}}}} \right)} {\left\| {{{\bf{q}}_{{\text{new}}}} - {{\bf{q}}_{{\text{nearest}}}}} \right\|}}} \right.
				\kern-\nulldelimiterspace} {\left\| {{{\bf{q}}_{{\text{new}}}} - {{\bf{q}}_{{\text{nearest}}}}} \right\|}}$;
		\WHILE{$r_{\text{new}} + r_{\text{nearest}} \le d$ }
		\STATE $d \leftarrow {\text{max}}\left(r_{\text{new}}, r_{\text{nearest}} \right)$;
		\STATE ${\bf q}_{\text{new}} \leftarrow {\bf q}_{\text{nearest}} + d \cdot {\bf t}$;
		\STATE $r_{\text{new}} \leftarrow \tt{FindMaxRadius}\left({\bf q}_{\text{new}}\right)$;
		\ENDWHILE
		\RETURN ${\bf x}_{\text{new}} \leftarrow \left({\bf q}_{\text{new} } , {r_{\text{new} }}\right) $;
	\end{algorithmic}
	\label{alg:TubeSteer}
\end{algorithm}

${\tt{NearConnect}}$: Find a set of the spheres, denoted by $X_{\text{near}}$, in which all spheres have intersections with the new sphere ${\bf x}_{\text{new}}$. The details are described in \emph{Algorithm \ref{alg:NearConnect}}. 
{It is worth noting that the set \(X_{\text{near}}\) is guaranteed to be a non-empty set with at least one element, which is ensured by \emph{Algorithm  \ref{alg:TubeSteer}}.}

$\tt Rewire$: Rewire the new sphere ${\bf x}_\text{new}$ to the near sphere which has the minimum cost in $X_\text{near}$. And all spheres in $X_\text{near}$ also are rewired, as shown in \emph{Algorithm \ref{alg:Rewire}}.

\begin{algorithm}
	\caption{NearConnect}
	\begin{algorithmic}[1]
		\REQUIRE tree $\mathcal{G}$, new sphere ${\bf x}_{\text{new}}$
		\ENSURE a set of near spheres $X_\text{near}$
		\STATE $\left(V , E\right) \leftarrow \mathcal{G}$, $\left({\bf q}_\text{new},r_\text{new}\right)\leftarrow {\bf x}_\text{new}$;
		\STATE $X_\text{near} \leftarrow \emptyset$;
		\FOR{${\bf x}_\text{near} \in V$}
		\STATE $\left({\bf q}_\text{near},r_\text{near}\right)\leftarrow {\bf x}_\text{near}$;
		\STATE $d \leftarrow \left\| {{{\bf{q}}_{{\text{new}}}} - {{\bf{q}}_{{\text{near}}}}} \right\|$;
		\IF{$r_\text{near} + r_\text{new} > d$}
			\STATE $X_\text{near} \leftarrow X_\text{near} \cup {\bf x}_\text{near} $;
		\ENDIF
		\ENDFOR
		\RETURN $X_\text{near}$;
	\end{algorithmic}
	\label{alg:NearConnect}
\end{algorithm}

${\tt Score}$: A $\tt Score$ function is designed to score a connection between two adjacent spheres, considering both the distance and the volume of sphere-sphere intersections, as depicted in 
\begin{equation}
	{\tt Score} = {\rho _d}\frac{{\left\| {{{\bf{q}}_{{\text{new}}}} - {{\bf{q}}_{{\text{near}}}}} \right\|}}{{\left\| {{{\bf{q}}_{{\text{goal}}}} - {{\bf{q}}_{{\text{init}}}}} \right\|}} + {\rho _v}{\left( {\frac{{{V_{{\mathop{\text{int}}} }}}}{\sigma_v } + \varepsilon } \right)^{ - 1}},
	\label{equ:score}
\end{equation}
where $\rho_v$ and $\rho_d$ are weight coefficients, $V_{\text{int}}$ is the volume of the intersection between spheres of ${\bf q}_\text{new}$ and ${\bf q}_\text{near}$, and $\sigma_v, \varepsilon$ are constants.

\begin{algorithm}
	\caption{Rewire}
	\begin{algorithmic}[1]
		\REQUIRE edges ${E}$, new sphere ${\bf x}_{\text{new}}$, a set of near spheres $X_\text{near}$
		\ENSURE edges ${E}$
			\STATE ${{\bf{x}}_{{\text{min}}}} \leftarrow {{\bf{x}}_{{\text{nearest}}}}\in X_\text{near};$\\${c_{{\text{min}}}} \leftarrow  {\tt Cost}\left( {{{\bf{x}}_{{\text{nearest}}}}} \right) + {\tt{Score}}\left( {{{\bf{x}}_{{\text{nearest}}}},{{\bf{x}}_{{\text{new}}}}} \right);$
			\FOR{${{\bf{x}}_{{\text{near}}}} \in {X_{{\text{near}}}}$}
			\IF{${\tt{ObstacleFree}}\left({\bf{x}}_{{\text{near}}},{\bf{x}}_{{\text{new}}}\right) \wedge {\tt Cost}\left( {{{\bf{x}}_{{\text{near}}}}} \right) + {\tt{Score}}\left( {{{\bf{x}}_{{\text{near}}}},{{\bf{x}}_{{\text{new}}}}} \right) < c_{\text{min}}$}
			\STATE ${{\bf{x}}_{\text{min} }} \leftarrow {{\bf{x}}_{{\text{near}}}};$\\
			${c_{{\text{min}}}} \leftarrow  {\tt Cost}\left( {{{\bf{x}}_{{\text{near}}}}} \right) + {\tt{Score}}\left( {{{\bf{x}}_{{\text{near}}}},{{\bf{x}}_{{\text{new}}}}} \right);$
			\ENDIF
			\ENDFOR
			\STATE $E \leftarrow E \cup \left\{ {\left( {{{\bf{x}}_{{\text{min}}}},{{\bf{x}}_{{\text{new}}}}} \right)} \right\};$
			\FOR{${{\bf{x}}_{{\text{near}}}} \in {X_{{\text{near}}}}$}
			\IF{${\tt{ObstacleFree}}\left({\bf{x}}_{{\text{near}}},{\bf{x}}_{{\text{new}}}\right) \wedge {\tt Cost}\left( {{{\bf{x}}_{{\text{new}}}}} \right) + {\tt{Score}}\left( {{{\bf{x}}_{{\text{near}}}},{{\bf{x}}_{{\text{new}}}}} \right) < {\tt Cost}\left( {{{\bf{x}}_{{\text{near}}}}}\right)$}
			\STATE ${{\bf{x}}_{{\text{parent}}}} \leftarrow {\tt{Parent}}\left( {{{\bf{x}}_{{\text{near}}}}} \right);$\\
			$E \leftarrow \left( {E{{\backslash }}\left\{ {\left( {{{\bf{x}}_{{\text{parent}}}},{{\bf{x}}_{{\text{near}}}}} \right)} \right\}} \right) \cup \left\{ {\left( {{{\bf{x}}_{{\text{new}}}},{{\bf{x}}_{{\text{near}}}}} \right)} \right\};$
			\ENDIF
			\ENDFOR
		\RETURN ${E}$;
	\end{algorithmic}
	\label{alg:Rewire}
\end{algorithm}

The Tube RRT* algorithm, as depicted in \emph{Algorithm \ref{alg:TubeRRT}}, incorporates the addition of a new sphere to the vertex set $V$ while considering both distance and volume of intersection. Consequently, only spheres that intersect with a sphere from the tree are added as new nodes, ensuring that any tree node has intersections with adjacent nodes. During the selection of the parent node and tree rewiring process, the score defined in (\ref{equ:score}) accounts for both distance and volume of intersection between adjacent spheres. This implies that the new sphere tends to select a parent node with a shorter distance and larger intersection volume, resulting in paths generated by the Tube RRT* algorithm having shorter lengths within a broader space.
\subsection{Homotopic Paths Generation}
The generation method of the homotopic paths based on Tube RRT* is described in this subsection. In the optimal virtual tube planning method \cite{mao2023optimal}, the boundary optimal trajectories of the virtual tube based on the boundary paths are generated first. Then, the trajectory within the virtual tube is optimal when its coefficients are interpolated from the boundary coefficients of trajectories, meaning the path points inside the virtual tube are simply interpolations of the boundary path points. Thus, the homotopic path can be generated by the interpolation among boundary paths. The homotopic path ${\boldsymbol{\sigma}}_{\boldsymbol{\theta}}$ within the virtual tube can be expressed as
\begin{equation}
	\begin{array}{rcl}
		{\boldsymbol{\sigma _\theta} } & = & \boldsymbol{\sigma} \left( {\left( {\sum\limits_{k = 1}^M {{\theta _k}{{\bf{q}}_{0,k}}} ,\sum\limits_{k = 1}^M {{\theta _k}{{\bf{q}}_{m,k}}} } \right),t} \right)\\
		& = &\sum\limits_{k = 1}^M {{\theta _k}\boldsymbol{\sigma} \left( {\left( {{{\bf{q}}_{0,k}},{{\bf{q}}_{m,k}}} \right),t} \right)} 
		= \sum\limits_{k = 1}^M {{\theta _k}{\boldsymbol{\sigma} _k}} ,
	\end{array}
	\label{equ:interpolation}
\end{equation}
where $\sum\nolimits_{k = 1}^M {{\theta _k}}  = 1,{\theta _k} \ge 0$, ${\bf q}_{0,k}$ and ${\bf q}_{m,k}$ are vertexes of the terminal ${\mathcal{C}_0}$ and $\mathcal{C}_1$ respectively, $M$ is the number of the vertexes in terminals, $\boldsymbol{\sigma}_k$ are the boundary paths. Combining (\ref{equ:interpolation}) with the process in \emph{Algorithm \ref{alg:TubeRRT}}, the time complexity of $l$ homotopic paths generation is $O\left(n(\text{log}n+1)+l\right)$ where $n$ is the number of samples. Compared with the time complexity $O\left(nl(\text{log}n+1)\right)$ of RRT* \cite{karaman2011sampling}, the time complexity of $l$ homotopic paths in Tube RRT* is independent of the number of samples $n$.

\begin{figure}
	\centering
	\includegraphics{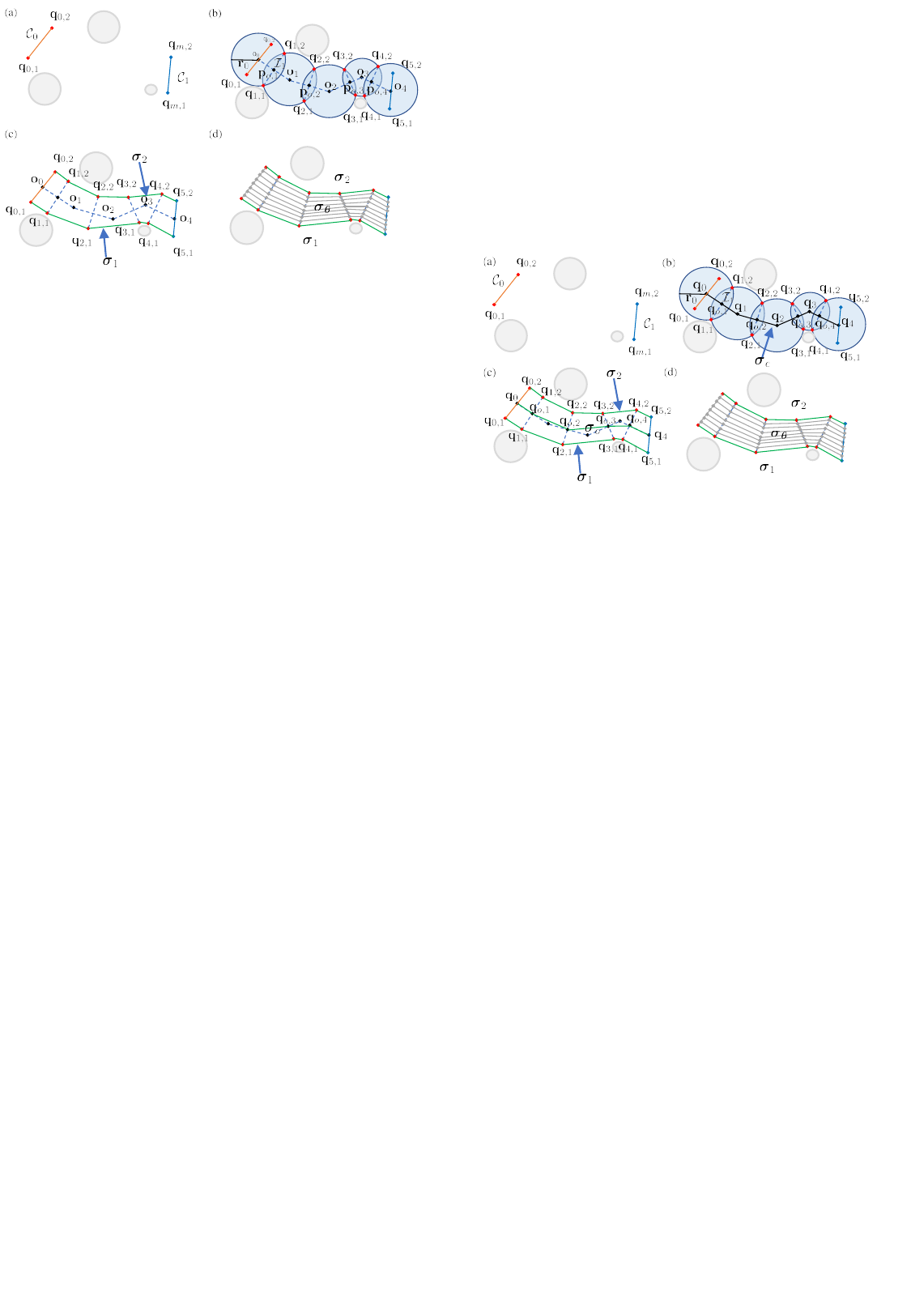}
	\caption{Examples of homotopic paths generation. (a) Red line and blue line represent terminal $\mathcal{C}_0$ and $\mathcal{C}_1$ respectively. (b) Tube RRT* algorithm generates a path $\boldsymbol{{\sigma}}_c$ from ${\bf q}_0$ to ${\bf q}_4$ represented by the black line. (c) Boundary paths $\boldsymbol{\sigma}_1$ and $\boldsymbol{\sigma}_2$ are denoted by green lines. (d) Homotopic paths $\boldsymbol{\sigma}_{\boldsymbol{\theta}}$ are represented by the gray lines.}
	\label{fig:path-point-selection-strategy}
	\vspace{-0.5cm}
\end{figure}
Thus, the key to generating homotopic paths lies in defining the boundary paths. The steps for this process, illustrated in Fig. \ref{fig:path-point-selection-strategy}, are described below.
\begin{itemize}
	\item[1.] The start and goal areas are denoted as terminal ${\mathcal{C}_0}$ and ${\mathcal{C}_1}$, respectively. The vertices of these regions are ${\bf q}_{0,k} \in {\mathcal{C}_0}$ and ${\bf q}_{m,k} \in {\mathcal{C}_1}$, such as ${\bf q}_{0,1}$, ${\bf q}_{0,2}$, ${\bf q}_{m,1}$, and ${\bf q}_{m,2}$, as shown in Fig. \ref{fig:path-point-selection-strategy}(a).
	\item[2.] The Tube RRT* algorithm generates a sequence of intersecting spheres centered at ${\bf q}_{i}$, with radii $r_i \left(i=0,1,...,m-1\right)$. Let $\boldsymbol{{\sigma}}_c$ be the sequence of centers $\{{\bf q}_i\}$. The points ${\bf q}_{o,k}$ are selected at the centers of the spherical intersections between adjacent spheres centered at ${\bf q}_{i-1}$ and ${\bf q}_{i}$. Points ${\bf q}_{i,k}$ lie on the boundaries of these spherical intersections, as shown in Fig. \ref{fig:path-point-selection-strategy}(b).
	\item[3.] The path $\boldsymbol{{\sigma}}_o$ is defined as a sequence of points $\{{\bf q}_{o,i}\} \left(i=0,1,...,m\right)$, where ${\bf q}_{o,0}={\bf q}_0$ and ${\bf q}_{o,m}={\bf q}_{m-1}$, as shown in Fig. \ref{fig:path-point-selection-strategy}(c). The boundary paths $\boldsymbol{\sigma}_k$ are sequences of points $\{{\bf q}_{i,k}\} \left(i=0,1,...,m\right)$.
	\item[4.] Finally, the homotopic path $\boldsymbol{\sigma}_{\boldsymbol{\theta}}$ is generated by interpolating among the boundary paths ${\boldsymbol{\sigma}_k}$. As shown in Fig. \ref{fig:path-point-selection-strategy}(d), the homotopic path $\boldsymbol{\sigma}_{\boldsymbol{\theta}}$ is generated by interpolating between the boundary paths ${\boldsymbol{\sigma}_1}$ and ${\boldsymbol{\sigma}_2}$.
\end{itemize}

\subsection{Analysis}
In this section, the probabilistic completeness and asymptotic optimality of the proposed algorithm are proved, followed by the analysis of some properties.
\textcolor{blue}{
\begin{The}[Probabilistic completeness of Tube RRT*]
		When the number of samples approaches infinity, if a solution exists for the path planning problem, the probability that Tube RRT* will find a feasible solution is 1, namely, 
\[\begin{array}{l}
	\mathop {\lim \inf }\limits_{n \to \infty } P({\boldsymbol{\sigma }_n} \in {\sum _{\boldsymbol{\sigma }_n}},{\boldsymbol{\sigma }_n}\left( 0 \right) = {{\bf{x}}_{{\text{init}}}},{\boldsymbol{\sigma }_n}\left( 1 \right) = {{\bf{x}}_{{\text{goal}}}},\\
	{{\boldsymbol{\sigma }}^*_n}\left( 1 \right) \in {{\cal C}_1}) = 1
\end{array}\]
where $\boldsymbol{\sigma}_n$ is the feasible path found by Tube RRT* with $n$ samples.
\label{the:1}
\end{The}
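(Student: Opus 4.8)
The plan is to adapt the classical probabilistic-completeness argument for sampling-based planners to the sphere-based structure of Tube RRT*. First I would invoke the hypothesis that a feasible solution exists: there is a continuous path $\boldsymbol{\gamma}$ in $X_\text{free}$ from ${\bf q}_\text{init}$ to the goal terminal $\mathcal{C}_1$. Since $X_\text{obs}$ is closed and $\boldsymbol{\gamma}$ is compact, the path possesses a positive clearance, and because a swarm-feasible path must admit tubes wider than the minimum radius, I would assume this clearance is at least $r_\text{min}$. This lets me cover $\boldsymbol{\gamma}$ by a finite chain of balls $B_1,\dots,B_K\subset X_\text{free}$, each of radius comparable to the clearance, with consecutive balls overlapping enough that any sphere centered in $B_i$ and any sphere centered in $B_{i-1}$ (both of radius at least $r_\text{min}$) are guaranteed to intersect, and with $B_1$ covering ${\bf q}_\text{init}$ and $B_K$ meeting $\mathcal{C}_1$.

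Next I would translate tree growth into a sequence of ``successful'' sampling events. Because $\tt{SampleFree}$ draws uniformly from the bounded set $X_\text{free}$, the probability that a single sample lands in $B_i$ is a fixed constant $p_i=\mu(B_i)/\mu(X_\text{free})>0$, where $\mu$ is Lebesgue measure. I would then argue, by induction on $i$, that once the tree contains a node whose center lies in $B_{i-1}$, any later sample falling in $B_i$ produces --- via $\tt{FindMaxRadius}$, $\tt{TubeSteer}$, and $\tt{NearConnect}$ --- a new sphere of radius at least $r_\text{min}$ that intersects an existing sphere and is therefore retained by the $r_\text{new}>r_\text{min}$ test on line 10 and wired into the tree by $\tt{Rewire}$. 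The intersection requirement enforced by $\tt{TubeSteer}$ is exactly what guarantees the new node connects rather than being discarded, so the chain of balls is traversed one link at a time.

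Finally I would bound the failure probability. The event that the tree fails to reach $\mathcal{C}_1$ within $n$ samples is contained in the event that, for some link $i$, no sample ever lands in $B_i$ after the tree has reached $B_{i-1}$; since each $B_i$ is hit with probability $p_i>0$ per sample and samples are i.i.d., the probability of missing $B_i$ across the remaining draws decays geometrically. A union bound over the finitely many links $K$ then gives $P(\text{failure})\le \sum_{i=1}^{K}(1-p_i)^{\,n/K}\to 0$, which yields $\liminf_{n\to\infty}P(\text{success})=1$ as claimed.

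I would flag the main obstacle as the coupling between the variable sphere radii and the intersection and retention conditions. Unlike standard RRT, the radius returned by $\tt{FindMaxRadius}$ depends on the local obstacle geometry and is capped by $r_\text{max}$ and floored by the acceptance threshold $r_\text{min}$; the delicate step is proving that the covering balls can be chosen so that a sample anywhere in $B_i$ always yields $r_\text{new}\ge r_\text{min}$ \emph{and} a genuine overlap with the predecessor sphere, given that the \texttt{WHILE} loop in $\tt{TubeSteer}$ may reposition the sample before it is added. Making the clearance and overlap quantitative enough to survive this repositioning, while keeping each $p_i$ strictly positive, is the crux of the argument.
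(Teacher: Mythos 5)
Your route is genuinely different from the paper's. The paper disposes of Theorem~1 in a few lines by reduction: it couples the sampling sequences so that the vertex sets of RRT and Tube RRT* coincide ($V_n^{RRT}=V_n^{Tube}$), observes that Tube RRT*, like RRT, returns a connected tree, and then invokes the known probabilistic completeness of RRT from the literature. You instead rebuild the classical covering argument from scratch: positive clearance of a witness path, a finite chain of overlapping balls $B_1,\dots,B_K$, positive per-sample hitting probabilities $p_i$, and a geometric failure bound. Your approach is self-contained and has the virtue of surfacing hypotheses that the paper's reduction quietly glosses over --- in particular that a ``solution'' must have clearance strictly above $r_\text{min}$, since samples failing the acceptance test $r_\text{new}>r_\text{min}$ on line 10 are discarded, a rejection mechanism RRT does not have and which makes the paper's claim $V_n^{RRT}=V_n^{Tube}$ itself not literally true.

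However, there is a genuine gap at your inductive step, and it is exactly the point you flag without closing. You claim that once the tree has a node in $B_{i-1}$, any sample landing in $B_i$ is retained so that ``the chain of balls is traversed one link at a time.'' But $\texttt{Nearest}$ selects the \emph{globally} nearest sphere, which need not be the node in $B_{i-1}$; if the sample sphere does not already intersect that nearest sphere, the \texttt{WHILE} loop in $\texttt{TubeSteer}$ relocates the center to distance $\max(r_\text{new},r_\text{nearest})$ from that node, so the accepted node can lie far outside $B_i$, and the recomputed $\texttt{FindMaxRadius}$ value at the relocated center can even drop below $r_\text{min}$, causing outright rejection. Hence the event ``a sample lands in $B_i$'' does not imply ``the tree gains a node in $B_i$,'' the induction stalls, and the bound $\sum_{i=1}^{K}(1-p_i)^{n/K}$ does not follow (that exponent also silently allocates $n/K$ i.i.d.\ trials to each link, which needs its own concentration argument rather than a plain union bound). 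Repairing this requires the kind of quantitative progress argument developed in the rigorous RRT completeness analyses --- showing that steering toward \emph{whatever} node is nearest still advances measurably along the covered path, with ball radii chosen small relative to the clearance margin --- or, more cheaply, the paper's reduction, which sidesteps steering geometry altogether. As written, the hardest part of your proof is declared rather than proved.
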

\begin{proof}
	See details in \emph{Appendix A}.
\end{proof}
\begin{The}[Asymptotic optimality of Tube RRT*]
	When the number of samples approaches infinity, if a solution exists for the path planning problem, the probability that Tube RRT* converges asymptotically
	towards the optimal solution is 1, namely,  
	\[P\left( {\mathop {\lim \sup }\limits_{n \to \infty } {\tt{Cost}}\left( \boldsymbol{\sigma }_n \right) = {\tt{Cost}}\left( {{\boldsymbol{\sigma} ^*_n}} \right)} \right) = 1\]
	where $\boldsymbol{\sigma}_n$ is the feasible path found by Tube RRT* with $n$ samples.
	\label{the:2}
\end{The}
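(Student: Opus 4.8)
The plan is to adapt the asymptotic-optimality argument for RRT* of Karaman and Frazzoli \cite{karaman2011sampling} to the sphere-intersection connection rule and the modified cost of Tube RRT*. That template rests on three pillars: (i) the rewiring step never increases the cost-to-come of any retained vertex, so $\mathtt{Cost}(\boldsymbol{\sigma}_n)$ is non-increasing in $n$ once a feasible path exists; (ii) probabilistic completeness (\emph{Theorem \ref{the:1}}) guarantees a feasible path is found almost surely; and (iii) a covering argument shows that for every $\varepsilon > 0$ the tree eventually contains a path whose cost is within $\varepsilon$ of the optimum. Together these give $\limsup_{n\to\infty} \mathtt{Cost}(\boldsymbol{\sigma}_n) = \mathtt{Cost}(\boldsymbol{\sigma}^*_n)$ almost surely.

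First I would establish monotonicity directly from \emph{Algorithm \ref{alg:Rewire}}. A parent edge is replaced only when the new \texttt{Score}-based cost is strictly smaller, and a vertex enters $V$ only through an intersecting sphere, so $X_{\text{near}}$ is guaranteed nonempty by the construction in \emph{Algorithm \ref{alg:TubeSteer}}. Hence the cost-to-come of every vertex is non-increasing across iterations, the sequence $\{\mathtt{Cost}(\boldsymbol{\sigma}_n)\}$ is non-increasing and bounded below by $\mathtt{Cost}(\boldsymbol{\sigma}^*_n)$, its limit exists, and the $\limsup$ collapses to that limit. It then remains only to prove that this limit equals the optimum.

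Next I would run the covering argument tailored to the intersecting-sphere rule. Fix the optimal path $\boldsymbol{\sigma}^*_o$ and cover it by a finite chain of overlapping balls $B_1,\dots,B_K$ centered on $\boldsymbol{\sigma}^*_o$, with radii chosen so that (a) each $B_j \subset X_{\text{free}}$ and (b) consecutive balls overlap enough that any two sample centers, one in each ball, generate \texttt{FindMaxRadius} spheres that intersect, so that \texttt{NearConnect} (\emph{Algorithm \ref{alg:NearConnect}}) links them. Because sampling is uniform in $X_{\text{free}}$, each $B_j$ is hit infinitely often with probability $1$ by a Borel--Cantelli argument; thus almost surely every ball eventually contains a vertex, the \texttt{TubeSteer} and \texttt{NearConnect} rules connect the consecutive vertices, and the resulting feasible path has a \texttt{Score}-cost approximating the cost accumulated along $\boldsymbol{\sigma}^*_o$. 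Refining the cover, $K \to \infty$, drives the approximation error to zero, so for every $\varepsilon$ the tree eventually holds a path of cost below $\mathtt{Cost}(\boldsymbol{\sigma}^*_n) + \varepsilon$.

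The main obstacle I anticipate is the inverse-volume term $\rho_v(V_{\text{int}}/\sigma_v + \varepsilon)^{-1}$ in the \texttt{Score} of (\ref{equ:score}). Unlike the purely additive length cost of classical RRT*, this term depends on the local sphere radii returned by \texttt{FindMaxRadius}, which vary with obstacle clearance and are clipped between $r_{\text{min}}$ and $r_{\text{max}}$. I would need to show that this term is continuous and bounded under refinement, so that shrinking the covering balls does not inflate the volume penalty uncontrollably and the discretized \texttt{Score} converges to a well-defined cost functional along $\boldsymbol{\sigma}^*_o$. Establishing this Lipschitz-type estimate for the combined distance-plus-inverse-volume cost, and verifying that the intersecting-sphere connection preserves it in the limit, is the crux; once it holds, the remaining steps follow the classical template.
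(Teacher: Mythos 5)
Your plan has a genuine flaw at its central step, beyond the fact that you explicitly defer the ``crux'' (the Lipschitz-type estimate for the inverse-volume term) and therefore deliver a plan rather than a proof. In the cost (\ref{equ:cost}), every edge contributes $\rho_v\left(V_{\text{int},i}/\sigma_v+\varepsilon\right)^{-1}$, and since the intersection of two spheres of radius at most $r_\text{max}$ has volume at most $\tfrac{4}{3}\pi r_\text{max}^3$, each edge's volume term is bounded \emph{below} by the positive constant $\rho_v\left(\tfrac{4\pi r_\text{max}^3}{3\sigma_v}+\varepsilon\right)^{-1}$. Hence a chain of $K$ overlapping spheres tracking the optimal path has cost at least linear in $K$, and your step ``refining the cover, $K\to\infty$, drives the approximation error to zero'' instead drives the cost to infinity. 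This is not a continuity issue you can patch: consistent with \emph{Proposition 1}, which shows the optimum is attained with the \emph{fewest} path points $m_\text{min}$ and equal intersection volumes, the optimizer here is a finite sphere chain, so the continuum-refinement limit that powers the classical RRT* covering argument is exactly the wrong limit for this functional. Any correct argument must compare the tree against near-optimal \emph{finite} chains whose number of spheres stays bounded, not against ever-finer discretizations of $\boldsymbol{\sigma}^*_o$.

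The paper takes a different and much shorter route: it reduces directly to Lemma 71 of \cite{karaman2011sampling}, observing that the RRT* optimality proof requires the connection radius to exceed $4\left(\mu\left(X_{\text{free}}\right)/\zeta_d\right)^{1/d}\log\left(\text{card}\left(V\right)\right)/\text{card}\left(V\right)^{1/d}$, a quantity decaying to zero in the number of vertices, while the sphere radii in Tube RRT* are bounded below by $r_\text{min}>0$; once the sphere radii dominate this threshold, the connectivity hypothesis of the RRT* argument holds and asymptotic optimality transfers. Your monotonicity and Borel--Cantelli observations are compatible with that reduction, but the covering-and-refinement machinery you build around them is both unnecessary for the paper's argument and, as explained above, unsound for this cost. (To be fair, the paper's one-line reduction is itself silent on whether Lemma 71's assumptions on the cost functional accommodate the inverse-volume term, so your instinct that this term needs separate treatment is sound --- but the treatment cannot be a refinement limit.)
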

\begin{proof}
	See details in \emph{Appendix B}.
\end{proof}
}
Based on these Theorems, the properties of cost in the proposed algorithm are analyzed. Let ${\boldsymbol{\sigma}}_c = \left\{ {{{\bf{q}}_i}} \right\}\left(i=0,1,...,m-1\right)$ be a collision-free path from start sphere ${\bf x}_\text{init} = \left({\bf q	}_\text{init}, r_\text{init} \right)$ to goal sphere ${\bf x}_\text{goal} = \left({\bf q	}_\text{goal}, r_\text{goal} \right)$. The cost of the path $\boldsymbol{\sigma}_c$ is expressed as
\begin{equation}
	{\tt{Cost}}\left( \boldsymbol{\sigma}_c \right) = \frac{{{\rho _d}}}{{\left\| {{{\bf{q}}_{{\text{goal}}}} - {{\bf{q}}_{{\text{init}}}}} \right\|}}\sum\limits_{i = 1}^{m-1} {{d_i}}  + {\rho _v}\sum\limits_{i = 1}^{m-1} {{{\left( {\frac{{{V_{{\text{int}},i}}}}{\sigma_v} + \varepsilon } \right)}^{ - 1}}},
	\label{equ:cost}
\end{equation}
where $d_i = \| {\bf q}_i - {\bf q}_{i-1} \|$ is the distance between centers of adjacent spheres, $V_{\text{int},i}$ is the intersection volume of the adjacent spheres. It should be noted that the path length in \emph{Definition \ref{def:tube-path}} is represent by $L\left(\boldsymbol{\sigma}_{o}\right)$. However, the path length considered in (\ref{equ:cost}) is for the path $L\left(\boldsymbol{\sigma}_{c}\right)$.
Thus, the following \emph{Lemma} \ref{lemma:2} shows that the length of $\boldsymbol{\sigma}_{o}$ can be approximated by the length of path $\boldsymbol{\sigma}_c$, if the number of samples tends towards infinity.
\begin{lemma}
	For any $\epsilon>0$, there exists $m_c \in \mathbb{Z}^+$ so that when the number of path points $m>m_c$ for the path $\boldsymbol{{\sigma}}_o$ , $L\left(\boldsymbol{\sigma}_c\right) - L\left(\boldsymbol{\sigma}_o\right) \le \epsilon.$
	\label{lemma:2}
\end{lemma}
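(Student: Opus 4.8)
The plan is to exploit the fact that the center path $\boldsymbol{\sigma}_o$ is an \emph{inscribed, corner-cutting} refinement of the sphere-center path $\boldsymbol{\sigma}_c$, so that $\boldsymbol{\sigma}_o$ can never be longer than $\boldsymbol{\sigma}_c$, and then to show that the gap between their lengths is controlled by the local step sizes, which vanish as $m\to\infty$. First I would establish the geometric key fact: for two intersecting spheres with centers ${\bf q}_{i-1},{\bf q}_i$ and radii $r_{i-1},r_i$, the intersection is a disk whose center ${\bf q}_{o,i}$ lies on the segment $[{\bf q}_{i-1},{\bf q}_i]$, by rotational symmetry about the line through the two centers. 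Writing $d_i=\|{\bf q}_i-{\bf q}_{i-1}\|$, the position of ${\bf q}_{o,i}$ satisfies $\|{\bf q}_i-{\bf q}_{o,i}\|=(d_i^2+r_i^2-r_{i-1}^2)/(2d_i)<r_i$ for a proper intersection. Hence every interior vertex of $\boldsymbol{\sigma}_o$ sits on an edge of $\boldsymbol{\sigma}_c$ (the shared endpoints being ${\bf q}_{o,0}={\bf q}_0$ and ${\bf q}_{o,m}={\bf q}_{m-1}$), so $\boldsymbol{\sigma}_o$ is obtained from $\boldsymbol{\sigma}_c$ by cutting the corner at each sphere center ${\bf q}_i$.

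Next, applying the triangle inequality at each corner gives $L(\boldsymbol{\sigma}_o)\le L(\boldsymbol{\sigma}_c)$, so the difference is a nonnegative sum of per-corner savings:
\[
L(\boldsymbol{\sigma}_c)-L(\boldsymbol{\sigma}_o)=\sum_i \Delta_i,\quad \Delta_i=a_i+b_i-\sqrt{a_i^2+b_i^2-2a_ib_i\cos\theta_i},
\]
where $a_i=\|{\bf q}_i-{\bf q}_{o,i}\|$, $b_i=\|{\bf q}_i-{\bf q}_{o,i+1}\|$, and $\theta_i$ is the interior turning angle at ${\bf q}_i$. I would then bound each term by rationalizing, using $1+\cos\theta_i=2\cos^2(\theta_i/2)$ and the harmonic-arithmetic mean inequality, to obtain $\Delta_i\le (a_i+b_i)\cos^2(\theta_i/2)\le a_i+b_i$, together with $a_i,b_i<r_i\le r_\text{max}$.

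Finally, I would close the argument by showing the sum vanishes as the path is sampled ever more finely. As $m$ grows, the adjacent-center distances $d_i$ shrink; since $a_i,b_i<r_i$ are forced toward $d_i/2$ when neighboring radii coincide in the fine limit, each $\Delta_i\to 0$, and because the total turning of the rectifiable limiting center curve is finite, the accumulated corner-cutting $\sum_i\Delta_i$ is dominated by the mesh size $\max_i d_i$. Choosing $m_c$ so that this mesh is small enough would then yield $L(\boldsymbol{\sigma}_c)-L(\boldsymbol{\sigma}_o)\le\epsilon$ for all $m>m_c$.

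I expect the main obstacle to be this last step. The naive per-corner bound $\Delta_i\le a_i+b_i\le 2r_\text{max}$, summed over the roughly $m$ corners, diverges, so the argument cannot rest on it; instead it must show that denser sampling forces the step sizes (and hence $a_i,b_i$) to shrink faster than the number of corners grows — equivalently, that $\boldsymbol{\sigma}_c$ and $\boldsymbol{\sigma}_o$ are two polygonal approximations of the \emph{same} rectifiable limit curve whose lengths both converge to its arc length. Making this precise requires either a bounded-turning (rectifiability) assumption on the limiting center path or a careful accounting of how the minimal admissible radius $r_\text{min}$, and thus the minimal step enforced by \emph{Algorithm \ref{alg:TubeSteer}}, decreases under refinement so that $\max_i d_i\to 0$.
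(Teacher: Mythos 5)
Your proposal takes essentially the same route as the paper's proof in \emph{Appendix C}: the collinearity of ${\bf q}_{o,i}$ with the two adjacent sphere centers gives the exact decomposition $\left\| {\bf q}_i - {\bf q}_{i-1} \right\| = \left\| {\bf q}_i - {\bf q}_{o,i} \right\| + \left\| {\bf q}_{o,i} - {\bf q}_{i-1} \right\|$ (the paper's (\ref{equ:length-of-center})), the triangle inequality at each sphere center yields the corner-cutting bound $L\left(\boldsymbol{\sigma}_o\right) \le L\left(\boldsymbol{\sigma}_c\right)$ (the paper's (\ref{equ:triangles-inequ}), which by the way carries an index slip --- the corner at ${\bf q}_{i-1}$ should pair ${\bf q}_{o,i-1}$ and ${\bf q}_{o,i}$ through ${\bf q}_{i-1}$), and the difference is written as a sum of per-corner savings. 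Your explicit disk-center formula $\left\| {\bf q}_i - {\bf q}_{o,i} \right\| = (d_i^2 + r_i^2 - r_{i-1}^2)/(2d_i)$ and the law-of-cosines estimate $\Delta_i \le (a_i + b_i)\cos^2(\theta_i/2)$ are refinements the paper omits (it uses the collinearity silently), but they do not change the skeleton of the argument.

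Where you and the paper part ways is the final limiting step, and your self-diagnosis of the "main obstacle" is exactly on target. The paper closes the proof by asserting, with no justification beyond the phrase "with the uniform random sampling," that for any $\epsilon > 0$ there exists $m_c$ such that $m > m_c$ implies $\max_i \left\| {\bf q}_i - {\bf q}_{o,i} \right\| \le \epsilon/m$ --- inequality (\ref{equ:random-distance}) --- and then sums over the $m-1$ corners. That is precisely the "mesh shrinks faster than the number of corners grows" claim you flag as unproven, and the paper does not prove it either; worse, as literally stated it cannot hold: summing the decomposition of each segment under (\ref{equ:random-distance}) gives $L\left(\boldsymbol{\sigma}_c\right) \le 2(m-1)\epsilon/m < 2\epsilon$, which contradicts $L\left(\boldsymbol{\sigma}_c\right) \ge \left\| {\bf q}_\text{goal} - {\bf q}_\text{init} \right\|$ as soon as $\epsilon < \left\| {\bf q}_\text{goal} - {\bf q}_\text{init} \right\|/2$. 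What the lemma actually needs --- and what your $\Delta_i$ bound is correctly set up to deliver --- is the much weaker statement that the per-corner \emph{savings}, not the half-segment lengths, are $O(\epsilon/m)$, which in turn requires controlling the turning angles $\theta_i$ or assuming the limiting center curve is rectifiable with bounded total turning, exactly the additional hypotheses you name at the end. So your attempt reproduces the paper's argument up to its final inequality, and the gap you candidly leave open is a genuine gap, but it is one the paper shares rather than fills; your version at least states the missing step honestly and identifies what would close it.
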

\begin{proof}
	See details in \emph{Appendix C}.
\end{proof}

According to \emph{Lemma \ref{lemma:2}}, the cost of goal point could be regarded as the a function $f=f_1 +f_2$ with respect to the number of the segments of the path, the total distance of the path, and the total volume of intersection of spheres, which is expressed as
\begin{equation}
	f={\tt{ Cost}}\left( \boldsymbol{\sigma}_o\right).
	\label{equ:costfunction}
\end{equation}
And rewiring function implies that the proposed algorithm intends to find a set of homotopic paths includes an optimal path ${\boldsymbol{{\sigma}}^*_o}$ with a minimum cost in (\ref{equ:costfunction}).
This cost accounts for both path length and intersection volume (gap volume). Between two paths of equal length, the one with a larger intersection volume is preferred. For paths with the same intersection volume, the shorter one is selected. If both length and volume are equal, the path with fewer segments and more uniform intersection volume is favored, as described in \emph{Proposition 1}.
\begin{prop}
	Let $\Pi_{\Sigma}$ be a collection of feasible homotopic paths, where $m_\text{min}$ denotes the minimum number of path points in $\Pi_{\Sigma}$. Suppose all homotopic paths $\Sigma_{\boldsymbol{\sigma}}$ in $\Pi_{\Sigma}$ have equal path lengths and equal total volume $V_\text{total}$. Then, the homotopic paths $\Sigma_{\boldsymbol{\sigma}}$ includes ${\boldsymbol{{\sigma}}^*}$ with the minimum cost is found when the number of path points for ${\boldsymbol{{\sigma}}^*}$ is $m_\text{min}$, and the intersection volumes $V_{{\text{int}},i}$ between any adjacent path points are ${{{V_{{\text{total}}}}} \mathord{\left/ {\vphantom {{{V_{{\text{total}}}}} {{m_{{\text{min}}}}}}} \right. \kern-\nulldelimiterspace} {{m_{{\text{min}}}}}}$.
\end{prop}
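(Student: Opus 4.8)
The plan is to exploit the additive form of the cost in~(\ref{equ:cost}) and split the statement into two independent optimization facts: one governing how a fixed total volume should be distributed among intersections, and one governing how many intersections to use. First I would note that, since every member of $\Pi_\Sigma$ shares the same path length, the length term $\frac{\rho_d}{\|\mathbf{q}_\text{goal}-\mathbf{q}_\text{init}\|}\sum_i d_i$ in~(\ref{equ:cost}) is constant across the collection. Minimizing ${\tt Cost}$ therefore reduces to minimizing the volume term $\rho_v\sum_i g(V_{\text{int},i})$, where $g(V)=\left(V/\sigma_v+\varepsilon\right)^{-1}$, subject to the single equality constraint $\sum_i V_{\text{int},i}=V_\text{total}$.

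For the inner step I would fix the number of path points $m$ (equivalently the number $N$ of adjacent intersections) and minimize over the distribution of volumes. A direct computation gives $g''(V)=2\sigma_v\left(V+\sigma_v\varepsilon\right)^{-3}>0$ for all $V\ge 0$, so $g$ is strictly convex. By Jensen's inequality---or equivalently a Lagrange-multiplier argument on the constrained problem---one obtains $\sum_{i=1}^{N} g(V_{\text{int},i})\ge N\,g\!\left(V_\text{total}/N\right)$, with equality precisely when all $V_{\text{int},i}$ equal $V_\text{total}/N$. This already establishes the equal-cross-sectional-volume claim once the optimal count is fixed.

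For the outer step I would substitute the equalized volumes to obtain a reduced cost depending only on $N$, namely $h(N)=\rho_v N\,g\!\left(V_\text{total}/N\right)=\rho_v\sigma_v N^2/\left(V_\text{total}+N\sigma_v\varepsilon\right)$. Differentiating yields $h'(N)=\rho_v\sigma_v N\left(2V_\text{total}+N\sigma_v\varepsilon\right)/\left(V_\text{total}+N\sigma_v\varepsilon\right)^2>0$, so $h$ is strictly increasing in $N$. Hence the minimum is attained at the smallest admissible number of intersections, i.e.\ at the path with $m_\text{min}$ points. Chaining the inner and outer conclusions gives the optimal configuration: $m_\text{min}$ path points with every intersection volume equal to $V_\text{total}/m_\text{min}$.

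The main obstacle I anticipate is not the two calculus computations but establishing that the nested minimization $\min_{\Pi_\Sigma}{\tt Cost}=\min_N h(N)$ is legitimate and attained: one must check that every admissible intersection count $N$ is realized by some member of $\Pi_\Sigma$ whose volumes can be equalized while preserving the common path length, so that the inner minimum genuinely equals $h(N)$ and the overall infimum is achieved at the smallest $N$. A secondary subtlety is the off-by-one between path points and intersections---the sum in~(\ref{equ:cost}) runs to $m-1$ while the statement quotes $V_\text{total}/m_\text{min}$---which I would reconcile against the indexing convention fixed in the path-point-selection procedure.
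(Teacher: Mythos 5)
Your proposal is correct and takes essentially the same route as the paper: the paper likewise discards the constant length term, equalizes the intersection volumes at fixed $m$ via a Lagrange/KKT argument (your Jensen step is the same strict-convexity fact), and then substitutes to reduce the cost to a function of $m$ alone, minimized at $m_\text{min}$. Two details you flag are genuinely present in the paper: the reduced objective in (\ref{equ:min-n-1}) is strictly \emph{increasing} in $m$ exactly as your derivative computation shows (the paper's phrase ``monotonically decreasing'' is a typo, since it then correctly selects $m_\text{min}$), and the off-by-one between points and intersections is real --- the paper's (\ref{equ:equal-volume}) writes $V_\text{total}/n$ with an undefined $n$ where $V_\text{total}/(m-1)$ is meant, while the statement quotes $V_\text{total}/m_\text{min}$.
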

\begin{proof}
	See details in \emph{Appendix D}.
\end{proof}

\section{Simulations \textcolor{blue}{and Experiments}}
In this section, the proposed algorithm is compared with the state-of-the-art algorithms to demonstrate its performance. Then, the impact of the parameter in the proposed algorithm is analyzed.
\textcolor{blue}{Finally, several simulations and experiments are conducted to validate the performance of the proposed algorithm.
All the simulations and experiments are implemented on a drone with NIVIDA Jetson Orin. The time limits of all algorithms are set to 0.1s.  
And, the metrics used in comparisons are described in the following.}
\begin{itemize}
	\item Average path length (APL): The average path length is computed by taking the average of all the path lengths $L\left(\boldsymbol{{\sigma}}\right)$ of homotopic paths in the tests.
	\item \textcolor{blue}{Minimum gap volume (MGV): The volume of the smallest gap which the paths passes through is represented by the volume of the smallest sphere among the spheres generated by the path planning algorithm.}
	\item \textcolor{blue}{Variance of safe distance (VSD): The variance of minimum distances to obstacles for path points, representing the degree of change in the cross section of the homotopic paths.}
\end{itemize}
The metrics used in flight simulations are as follows.
\begin{itemize}
	\item \textcolor{blue}{Flight time: The flight time is measured from the start of the drone swarm until the last drone in the swarm reaches the target area.}
	\item \textcolor{blue}{Minimum distance: The minimum distance between drones during the flight time.}
	\item \textcolor{blue}{Flight speed: The speeds of all drones during the flight time.}
\end{itemize}
\subsection{{Performance in Comparisons}}
Tube RRT* is compared with RRT*\cite{karaman2011sampling}, BIT*\cite{gammell2020batch}, and Safe-region RRT*\cite{gao2019flying}, in terms of APL, MGV, and VSD. All algorithms are implemented in C++ and executed on an Ubuntu 20.04 system, with RRT* and BIT* being called from OMPL. The size of environment for comparisons is 25m$\times$25m$\times$3m, with obstacles generated randomly. The start and goal points are consistently set. The hundreds of tests are conducted in environments with 20, 40, 60, and 80 obstacles.
\begin{figure}
	\centering
	\includegraphics{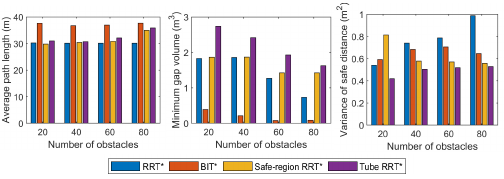}
	\caption{\textcolor{blue}{Comparisons of metrics for algorithms in environments with different number of obstacles.}}
	\label{fig:sr-com}
	\vspace{-0.5cm}
\end{figure}

The statistical results are shown in Fig. \ref{fig:sr-com}. It can be observed that the MGV of Tube RRT* is greater than that of the other methods, indicating that it considers the volume of gaps, which is more conducive to swarm passing through. Additionally, for the volume of gaps in the environment decreasing with more obstacles, the MGV of Tube RRT* decreases as the number of obstacles increases. As for the APL, Tube RRT* is slightly higher than RRT* and Safe-region RRT* but significantly lower than BIT*.
\textcolor{blue}{Meanwhile, the VSD generated by Tube RRT* is the smallest, indicating that the changes of cross-section in homotopic paths are minimal, which helps to avoid unnecessary dispersion and convergence of the swarm.}
Overall, Tube RRT* achieves a significant increase in MGV while keeping the path length increase minimal, which is beneficial for generating homotopic paths and avoiding congestion for the swarm.
\subsection{{Analysis of Parameter Impact}}
{In this subsection, the impact of the parameter \(\rho_v\) in (\ref{equ:cost}) on the performance of Tube RRT* is analyzed.}
{The simulation environment, a map with random obstacles, is reused from our previous work \cite{mao2023optimal}. The size of the map is 250$\times$200$\times$30m, and the size of the random obstacles is 10$\times$10$\times$30m.}
The other parameters in (\ref{equ:cost}) are set as follows: $\rho_d = 1$,  $\sigma_v=1413.7$, and $\epsilon = 0.01$. 
Multiple tests have been performed in different obstacle environments, as shown in Fig. \ref{fig:path-comparsion}. 
And, the metrics are shown in Fig. \ref{fig:pathparam}.

\begin{figure}
	\centering
	\includegraphics{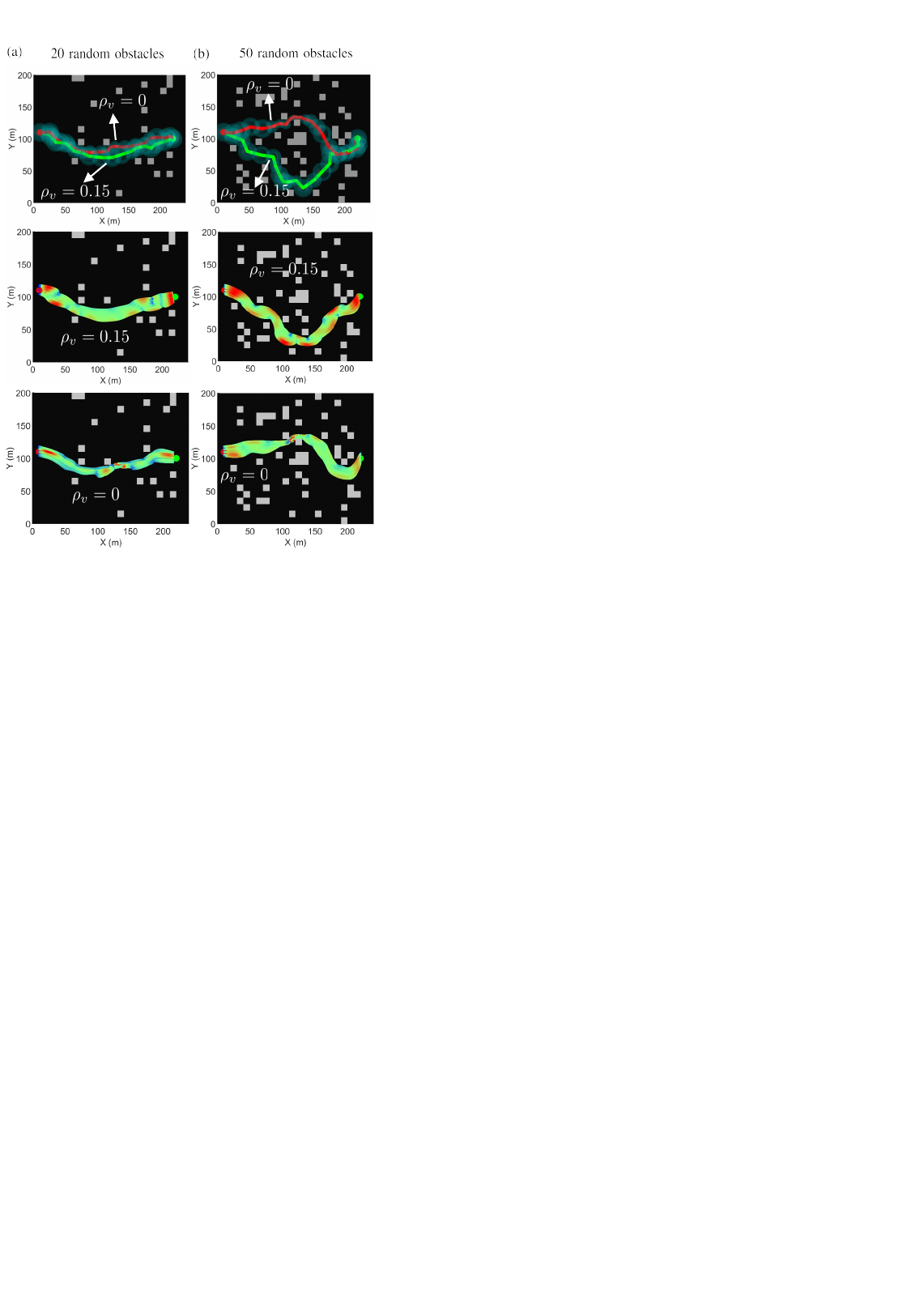}
	\caption{Comparisons of paths generated by Tube RRT*. The red and green points represent the start area and goal area respectively. And the green curves and red curves are generated by Tube RRT* with $\rho_v = 0.15$ and $\rho_v = 0$ respectively. The colorful curves represent trajectories for robots and the colors from blue to red represent the speed from zero to maximum.}
	\label{fig:path-comparsion}
	\vspace{-0.4cm}
\end{figure}
\begin{figure}
	\centering
	\includegraphics{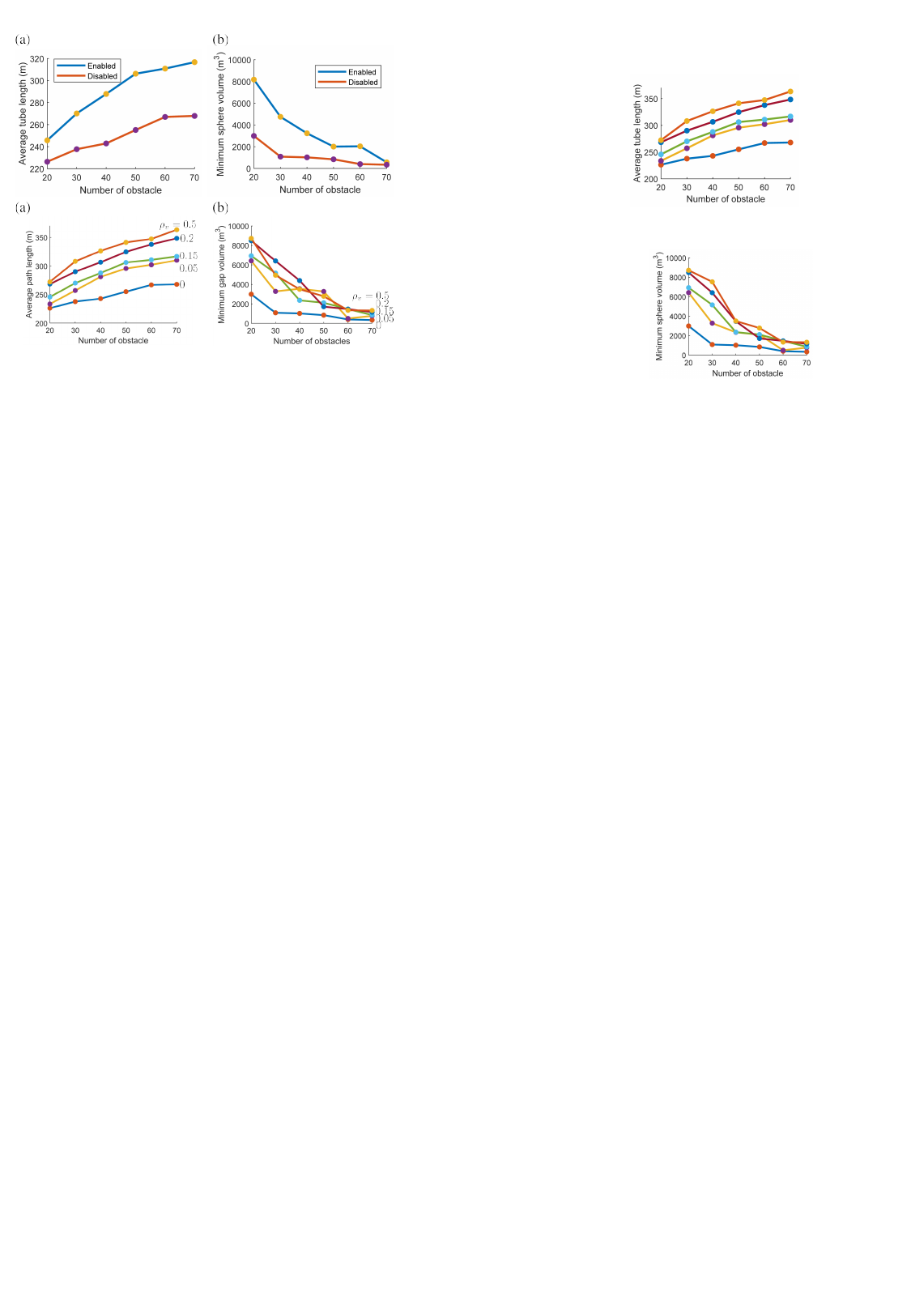}
	\caption{{Metrics for Tube RRT* with varying numbers of obstacles.}}
	\label{fig:pathparam}
	\vspace{-0.7cm}
\end{figure}

{It can be observed that the method with high $\rho_v$ has a relatively larger MGV and a longer APL.}
{When the number of obstacles is small, the APL for all algorithms exhibit little difference, indicating that they generate quite similar paths, as shown in Fig. \ref{fig:path-comparsion}(a). }
{As the number of obstacles increases, the APL of Tube RRT* increases significantly and the MGV of Tube RRT* decreases. Although the MGV of Tube RRT* with high $\rho_v$ is larger, the APL of Tube RRT* with high $\rho_v$ is also larger.}
This suggests that the Tube RRT* algorithm strikes a balance between path length and gap volume, sacrificing the suitable path length in exchange for larger gap size. 
{Therefore, choosing a suitable  \( \rho_v \) would have}
advantageous for the swarm navigation, as it helps to avoid congestion at narrow passages, thereby enhancing traversal efficiency.
However, in scenarios where the number of obstacles is excessively high, the parameter \(\rho_v\) has a minimal impact on MGV and APL, as illustrated in Fig. \ref{fig:pathparam}(b). Consequently, the Tube RRT* algorithm demonstrates more pronounced performance in large-scale, obstacle-dense environments.


\subsection{Simulate Flight Results}
In this subsection, optimal virtual tubes are generated based on the homotopic paths planned by Tube RRT* with $\rho_v=0$ and $\rho_v=0.15$. The distributed controller and planning method remain consistent with \cite{mao2023optimal}. The desired flight time is standardized to 90s across all scenarios.
The swarm consists of 20 drones, with a safety radius of 0.5m and an avoidance radius of 1m. The maximum speed is set to 7m/s.
Through the comparisons of various metrics, including {flight time}, flight speed, and minimum distance among drones, the effectiveness of the proposed method is validated.

When the obstacles in the environment are relatively sparse, the tube paths planned by the two algorithms show minimal differences, resulting in similar flight times, as shown in Table \ref{table:time-comparison}. However, in environments with more obstacles, as shown in Fig. \ref{fig:path-comparsion}(b), the flight results diverge significantly. Specifically, in terms of flight time, the Tube RRT* with $\rho_v=0.15$ yields the result allowing the swarm to reach the target area in a shorter time. Conversely, the virtual tube generated by the Tube RRT* with $\rho_v=0$ navigates through narrow passages, leading to congestion within the swarm. This congestion is evident in the blue area in Fig. \ref{fig:speed-comparison}(a), where the flight speed of the swarm experiences more drastic fluctuations compared with the virtual tube generated by Tube RRT* shown in Fig. \ref{fig:speed-comparison}(b), resulting in increased collision risk and reduced safety. As shown in Fig. \ref{fig:speed-comparison}(c), the proximity of the drone exceeds the safe distance threshold when traversing narrow gaps.
{Although the virtual tube based on Tube RRT* with $\rho_v=0.15$ has more turns, leading to greater speed variance outside the blue block, as shown in Fig. \ref{fig:speed-comparison}, compared to the virtual tube based on Tube RRT* with $\rho_v=0$, it avoids congestion and thus improves traversal efficiency.}
{When the number of obstacles in the environment is extremely dense, Tube RRT* with $\rho_v=0.15$ does not perform better in terms of metrics in the above subsection, resulting in a longer flight time, as shown in the case of 70 obstacles of Table \ref{table:time-comparison}.}

\begin{figure}
	\centering
	\includegraphics{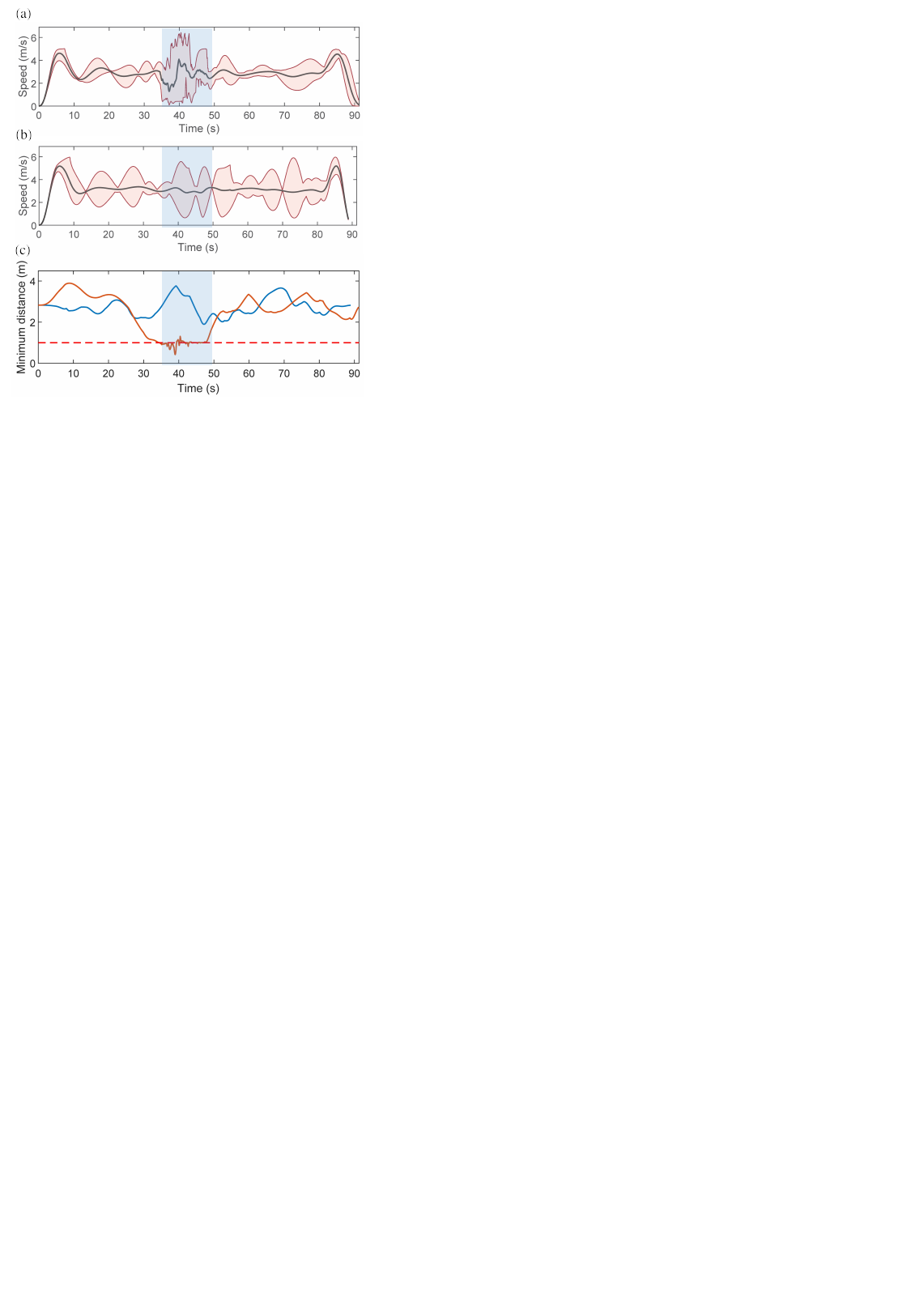}
	\caption{Swarm speed distributions and the minimum distances among drones over time in a map with 50 random obstacles. For (a) and (b), the red curves and black curves represent the maximum speed, minimum speed, and average speed of the swarm. (a) Speed distributions in the tube based on the Tube RRT* with $\rho_v=0$. The blue block represents the segment of passing through narrow gaps. (b) Speed distributions in the tube based on the Tube RRT* with $\rho_v=0.15$. (c) The red curve and blue curve represent the minimum distance in tubes generated by the Tube RRT* with $\rho_v=0$ and $\rho_v=0.15$ respectively. And, the red line represents the safety distance.}
	\label{fig:speed-comparison}
	\vspace{-0.4cm}
\end{figure}


\begin{table}
	\centering
	\begin{tabular}{|l|c|c|c|c|c|c|}
		\hline
		{$\rho_v$} & 20 & 30 & 40& 50  & 60& {70} \\ \hline
		{0}      & 90.1s & 93.4s & 96.8s & 91.3s & 92.8s&{99.5s} \\ \hline
		{0.15} & 90.1s & 89.0s & 89.5s & 89.0s & 88.9s&{106.8s }\\ \hline
	\end{tabular}
	\caption{Flight times of methods in maps with various obstacles.}
	\label{table:time-comparison}
	\vspace{-1cm}
\end{table}
\begin{figure*}[h]
	\centering
	\setlength{\abovecaptionskip}{-0.4cm}
	\includegraphics{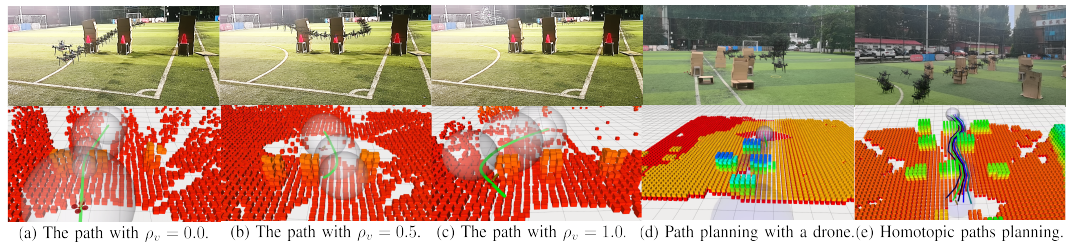}
	\caption{\textcolor{blue}{The real flight experiments of real-time path planning generated by Tube RRT*. }}
	\label{fig:real-flight}
	\vspace{-0.5cm}
\end{figure*}
\subsection{\textcolor{blue}{Experiments}}
\textcolor{blue}{To validate the actual performance of the proposed algorithm, the experimental setup includes drones equipped with the onboard computer (NVIDIA Jetson Orin) and LiDAR (Mid-360). The point cloud of obstacles is obtained from LiDAR, localization and the proposed path planning algorithm are implemented on NVIDIA Jetson Orin. 
First, as shown in Figure \ref{fig:real-flight}(a)-(c), the impact of the parameter $\rho_v$ on the path was verified. Then, in an environment with random obstacles, the real-time performance of the proposed method was validated through a drone quickly navigating the unknown environment, as shown in Figure \ref{fig:real-flight}(d). And the homotopic path planning was verified by sharing the homotopic paths among three drones, as shown in Figure \ref{fig:real-flight}(e).
}
\vspace{-0.3cm}
\section{Conclusion and Future Work}
The Tube RRT* algorithm is introduced for planning homotopic paths, distinguished by its simultaneous consideration of the gap volume and the path length.  
Through simulations and experiments, the proposed algorithm has been substantiated to significantly improve the capability of swarm to navigate obstacle environments. 
\textcolor{blue}{However, homotopic path planning in high-dimensional spaces, dynamic obstacle environments, or extremely dense and complex environments requires further research in future work.}
\addtolength{\textheight}{-0cm}   




%


\bibliographystyle{bib/IEEEtran} 
\bibliography{bib/IEEEabrv,bib/IEEEbib}
\vspace{-0.35cm}
\section*{APPENDIX}
\subsection{The Proof of Theorem \ref{the:1}}
\textcolor{blue}{
The probabilistic completeness of RRT has been proven in \cite{lavalle2001randomized}. Let the graphs $G\left(V,E\right)$ construction by RRT and Tube RRT* be $G_n^{RRT}\left(V_n^{RRT},E_n^{RRT}\right)$ and $G_n^{Tube}\left(V_n^{Tube},E_n^{Tube}\right)$ respectively. By the same sampling distribution, $V_n^{RRT} = V_n^{Tube}$. Moreover, Tube RRT*, like RRT, returns a connected graph, more specifically, a tree. Hence, the
result follows directly from the probabilistic completeness of RRT.
}
\subsection{The Proof of Theorem \ref{the:2}}
\textcolor{blue}{In the proof of asymptotic optimality for RRT* (Lemma 71 in \cite{karaman2011sampling}) , the connection radius $r_n$ must be greater than $4{\left( {\frac{{\mu \left( {{X_{{\text{free}}}}} \right)}}{{{\zeta _d}}}} \right)^{\frac{1}{d}}}\frac{{\log \left( {{\text{card}}\left( V \right)} \right)}}{{{\text{card}}{{\left( V \right)}^{\frac{1}{d}}}}}$. Thus, if the radius of the sphere in the proposed algorithm is greater than $4{\left( {\frac{{\mu \left( {{X_{{\text{free}}}}} \right)}}{{{\zeta _d}}}} \right)^{\frac{1}{d}}}\frac{{\log \left( {{\text{card}}\left( V \right)} \right)}}{{{\text{card}}{{\left( V \right)}^{\frac{1}{d}}}}}$, the asymptotic optimality is obtained.}
\subsection{The Proof of Lemma \ref{lemma:2}}
The length of $\boldsymbol{\sigma}_c$ shown in Fig. \ref{fig:path-point-selection-strategy}(b) is expressed as 
\begin{equation}
	\begin{array}{rcl}
		L\left( {{{\boldsymbol{\sigma }}_c}} \right) &=& \sum\nolimits_{i = 1}^{m-1} {\left\| {{{\bf{q}}_i} - {{\bf{q}}_{i - 1}}} \right\|} \\
		&=& \sum\nolimits_{i = 1}^{m-1} {\left( {\left\| {{{\bf{q}}_i} - {{\bf{q}}_{o,i}}} \right\| + \left\| {{{\bf{q}}_{o,i}} - {{\bf{q}}_{i - 1}}} \right\|} \right)} .
	\end{array}
	\label{equ:length-of-center}
\end{equation}
And the length of $\boldsymbol{\sigma}_o$ is expressed as 
\begin{equation}
	L\left(\boldsymbol{\sigma}_o\right) = \sum\nolimits_{i = 1}^{m} {\left\| {{{\bf{q}}_{o,i}} - {{\bf{q}}_{o,i - 1}}} \right\|}, {\bf q}_{o,0} = {\bf q}_0, {\bf q}_{o,m} = {\bf q}_{m-1}.
	\label{equ:length-of-middle} 
\end{equation}
Combining the triangle inequality
\begin{equation}
	\left\| {{{\bf{q}}_{o,i}} - {{\bf{q}}_{o,i - 1}}} \right\| \le \left\| {{{\bf{q}}_{i}} - {{\bf{q}}_{o,i}}} \right\| + \left\| {{{\bf{q}}_{i - 1}} - {{\bf{q}}_{o,i}}} \right\|
	\label{equ:triangles-inequ}
\end{equation}
with the uniform random sampling, for any $\epsilon>0$, there exists $m_c\in \mathbb{Z}^+$ such that when $m>m_c$, 
\begin{equation}
	\max \left\{ {\left\| {{{\bf{q}}_i} - {{\bf{q}}_{o,i}}} \right\|} \right\} \le \frac{\varepsilon }{m}.
	\label{equ:random-distance}
\end{equation}
Thus, combining (\ref{equ:length-of-center}), (\ref{equ:length-of-middle}), (\ref{equ:triangles-inequ}) with (\ref{equ:random-distance}), it can be obtained that 
\begin{equation}
	L\left(\boldsymbol{\sigma}_c\right) - L\left(\boldsymbol{\sigma}_o\right) \le \epsilon.
\end{equation}

\subsection{The Proof of Proposition 1}
Let $V_\text{total}$ be the total intersection volume of the path. The problem which satisfies KKT condition can be expressed as a convex optimization problem
\begin{subequations}
	\begin{align}
		\mathop {\min }\limits_{{V_{{\text{int}},i}},m} {\text{  }}& \sum\nolimits_{i = 1}^{m - 1} {{{\left( {{{{V_{{\text{int}},i}}} \mathord{\left/
								{\vphantom {{{V_{{\text{int}},i}}} {{\sigma _v}}}} \right.
								\kern-\nulldelimiterspace} {{\sigma _v}}} + \varepsilon } \right)}^{ - 1}}} \label{equ:prop1a}\\
		{\text{s}}.{\text{t}}.{\text{ }}&\sum\nolimits_{i = 1}^n {{V_{{\text{int}},i}}}  = {V_{{\text{total}}}},  \label{equ:prop1b}\\
		&{V_{{\text{int}},i}} > 0,i = 1,...,m-1,\label{equ:prop1c}\\
		&m \ge m_\text{min}\in \mathbb{Z}^{+}.
	\end{align}
	\label{equ:prop1}
\end{subequations}
A step-by-step approach to optimization is used. First, variables $V_{\text{int},i}$ are optimized. Then, optimal $m$ is obtained. 

The Lagrange function of problem (\ref{equ:prop1}) is obtained by
\begin{equation}
	\begin{array}{l}
		L\left( {{V_{{\text{int}},i}},\lambda ,{\mu _i}} \right){\text{ }} = \sum\nolimits_{i = 1}^{m - 1} {{{\left( {{{{V_{{\text{int}},i}}} \mathord{\left/
								{\vphantom {{{V_{{\text{int}},i}}} {{\sigma _v}}}} \right.
								\kern-\nulldelimiterspace} {{\sigma _v}}} + \varepsilon } \right)}^{ - 1}}}  + \\ 
		\lambda \left( {\sum\nolimits_{i = 1}^{m - 1} {{V_{{\text{int}},i}}}  - {V_{{\text{total}}}}} \right) - \sum\nolimits_{i = 1}^{m - 1} {{\mu _i}{V_{{\text{int}},i}}} .
	\end{array}
\end{equation}
Thus, KKT condition contains
\begin{subequations}
	\begin{align}
		{\nabla _{{V_{{\text{int}},i}}}}L =  - \sigma _v^{ - 1}{\left( {{{{V_{{\text{int}},i}}} \mathord{\left/
						{\vphantom {{{V_{{\text{int}},i}}} {{\sigma _v}}}} \right.
						\kern-\nulldelimiterspace} {{\sigma _v}}} + \varepsilon } \right)^{ - 2}} + &\lambda  - {\mu _i} = 0,\label{equ:kkt-a}\\
		\sum\nolimits_{i = 1}^{m - 1} {{V_{{\text{int}},i}}}  - {V_{{\text{total}}}} &= 0,\label{equ:kkt-b}\\
		 -  {V_{{\text{int}},i}} & \le 0,\\
		 {\mu _i} &\ge 0,\label{equ:kktd}\\
		 {{\mu _i}{V_{{\text{int}},i}}} = 0, &i = 1,...,m-1.\label{equ:kkte}
	\end{align}
\end{subequations}
Combining (\ref{equ:prop1c}), (\ref{equ:kktd}) and (\ref{equ:kkte}), it is obtained that $\mu_i = 0$, so that (\ref{equ:kkt-a}) can be expressed as 
\begin{equation}
	\lambda  = \sigma _v^{ - 1}{\left( {{{{V_{{\text{int}},i}}} \mathord{\left/
					{\vphantom {{{V_{{\text{int}},i}}} {{\sigma _v}}}} \right.
					\kern-\nulldelimiterspace} {{\sigma _v}}} + \varepsilon } \right)^{ - 2}},i = 1,...,m - 1.
	\label{equ:equal=vol}
\end{equation}
By substituting (\ref{equ:equal=vol}) into (\ref{equ:kkt-b}), it is obtained that
\begin{equation}
	V_{{\text{int}},i}^* = {{{V_{{\text{total}}}}} \mathord{\left/
			{\vphantom {{{V_{{\text{total}}}}} n}} \right.
			\kern-\nulldelimiterspace} n},i = 1,...,m - 1.
	\label{equ:equal-volume}
\end{equation}

In the next step, substitute  (\ref{equ:equal-volume}) into (\ref{equ:prop1}) to obtain a new optimization problem which is expressed as
\begin{subequations}
	\begin{align}
		\mathop {\min }\limits_{m\in \mathbb{Z}^+} \frac{{{{\left(m-1\right)}^2}}}{{\frac{{{V_{{\text{total}}}}}}{\sigma_v } + \varepsilon {\left(m-1\right)}}}\label{equ:min-n-1}\\
		\text{s.t. } m\ge m_\text{min}.
	\end{align}
	\label{equ:min-n}
\end{subequations}
The objective function in (\ref{equ:min-n-1}) is monotonically decreasing with respect to $m$. Thus, the solution to (\ref{equ:min-n}) is $m_\text{min}$, namely, the minimum number of path points.

\end{document}